\documentclass{article}

    \PassOptionsToPackage{numbers, compress}{natbib}
 \usepackage[preprint]{neurips_2026}

\usepackage[utf8]{inputenc} 
\usepackage[T1]{fontenc}    
\usepackage{hyperref}       
\usepackage{url}            
\usepackage{booktabs}       
\usepackage{amsfonts}       
\usepackage{nicefrac}       
\usepackage{microtype}      
\usepackage{xcolor}         

\usepackage{xcolor}
\usepackage{colortbl} 
\usepackage{url}

\usepackage{algorithmic}
\usepackage[ruled,vlined]{algorithm2e}
\usepackage[shortlabels]{enumitem}
\setlist[enumerate]{nosep}

\usepackage{amsmath}
\usepackage{mathtools}
\usepackage{amsthm}
\usepackage{bm}
\usepackage{mathrsfs}
\usepackage[capitalize,noabbrev]{cleveref}
\usepackage{multirow}
\usepackage{booktabs} 

\theoremstyle{plain}
\newtheorem{theorem}{Theorem}[section]

\newtheorem{lemma}[theorem]{Lemma}

\theoremstyle{definition}
\newtheorem{definition}{Definition}[section]

\theoremstyle{remark}

\usepackage{changepage}
\definecolor{myhighlight}{RGB}{220,240,255}
\usepackage{multicol}
\SetKwInput{KwInput}{Input}
\SetKwInput{KwOutput}{Output}
\usepackage{tcolorbox}
\usepackage{caption}
\usepackage{thm-restate}
\usepackage{wrapfig}
\usepackage{float}

\usepackage{amssymb}

\DeclareMathOperator{\clip}{clip}

\DeclareMathOperator{\mean}{mean}
\DeclareMathOperator{\std}{std}

\hypersetup{
	colorlinks=true
}

\tcbset{
  takeaway/.style={
    colback=gray!10!white,
    colframe=gray!60!black,
    fonttitle=\bfseries,
    coltitle=white,
    boxrule=1pt,
    arc=4pt,
    left=6pt,
    right=6pt,
    top=6pt,
    bottom=6pt,
    title=Takeaway,
  }
}
\tcbset{
  example/.style={
    colback=blue!10!white,  
    colframe=blue!60!black, 
    fonttitle=\bfseries,
    coltitle=white,
    boxrule=1pt,
    arc=4pt,
    left=6pt,
    right=6pt,
    top=6pt,
    bottom=6pt,
    title=Example,
  }
}

\usepackage{bm}
\usepackage{xspace}
\def\algo{POPO\xspace}
\usepackage{multirow}
\usepackage{colortbl}
\definecolor{myhighlight}{RGB}{220,240,255}
\usepackage{tcolorbox}
\usepackage{subfigure}

\title{RLVR without Ineffective Samples: Group Prioritized Off-Policy Optimization for LLM Reasoning}

\author{
Yixiu Mao,\quad Yun Qu,\quad Qi Wang\thanks{Corresponding authors.},\quad Heming Zou,\quad Xiangyang Ji\footnotemark[1]\\
Department of Automation, Tsinghua University\\
\texttt{\{myx21, qy22, zouhm24\}@mails.tsinghua.edu.cn} \\
\texttt{cheemswang@mail.tsinghua.edu.cn}, \quad \texttt{xyji@tsinghua.edu.cn} \\
}

\begin{document}

\maketitle

\begin{abstract}
Reinforcement learning with verifiable rewards (RLVR) has emerged as a powerful paradigm for enhancing the reasoning capabilities of large language models (LLMs). However, its effectiveness is substantially hindered by the prevalence of ineffective training data: many sampled prompts yield response groups that are either entirely correct or entirely incorrect, resulting in zero-variance rewards and limited learning signals. Recent state-of-the-art methods address this issue through extensive LLM rollouts to filter ineffective samples, but at the cost of considerable computational overhead. Alternative approaches, including predictive sampling and trajectory replay, aim to improve data efficiency but often remain insufficient and may introduce additional issues such as systematic bias or suboptimal constraints. To address these limitations, we propose Group Prioritized Off-Policy Optimization (POPO), a simple yet effective framework that fully exploits effective training batches without additional rollout overhead. POPO comprises two key components: prioritized group replay and decoupled off-policy optimization. The former replaces ineffective on-policy groups with effective off-policy groups via a recency-based replay mechanism that jointly considers sample quality and the degree of off-policiness. To further mitigate the off-policy gap, POPO employs decoupled importance sampling to correct off-policy bias while maintaining stable policy updates under consistent trust-region constraints. Empirical evaluations across diverse reasoning tasks, including mathematics, planning, and visual geometry, demonstrate that POPO substantially accelerates RL finetuning and achieves strong reasoning performance with significantly fewer rollouts.
\end{abstract}

\section{Introduction}

\begin{figure}[t]
    \centering
    \includegraphics[width=\linewidth]{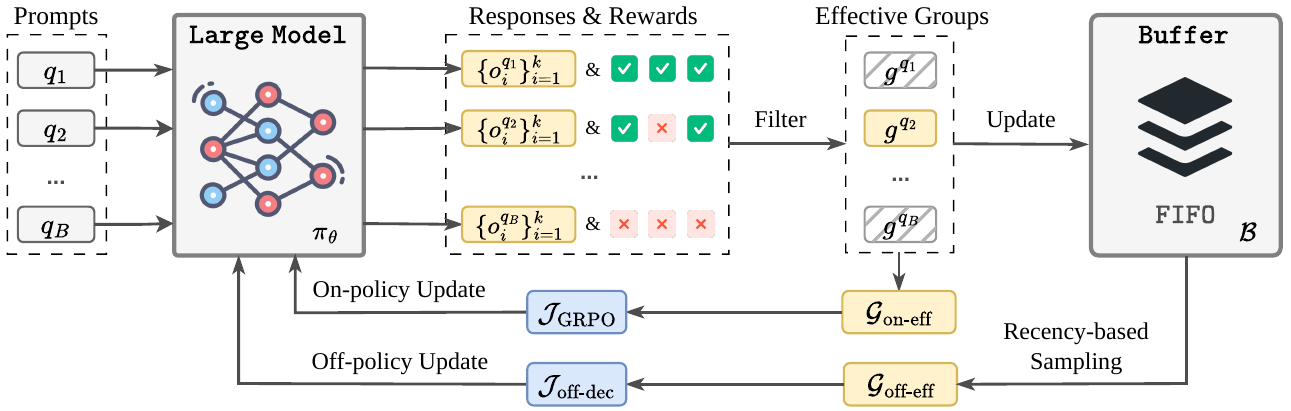}
    \caption{
    Framework overview of \algo.
    During RLVR training, ineffective online groups are replaced with high-quality off-policy groups via recency-based replay, forming batches composed entirely of effective samples without additional overhead. For replayed off-policy groups, decoupled importance sampling is introduced to correct off-policy bias and maintain a consistent trust-region constraint across on- and off-policy updates, enabling stable and efficient policy optimization.
    }
    \vspace{-7pt}
    \label{fig:framework}
\end{figure}

Reinforcement learning with verifiable rewards (RLVR) has emerged as a core technique for enhancing the reasoning capabilities of large language models (LLMs)~\citep{lightman2023let,jaech2024openai,guo2025deepseek,team2025kimi}. By encouraging LLMs to generate extended chain-of-thought (CoT) reasoning~\citep{wei2022chain} and optimizing policies using verifiable reward signals~\citep{jaech2024openai}, RLVR has enabled substantial progress in large reasoning models (LRMs) across a broad range of knowledge-intensive tasks, including scientific question answering~\citep{he2024olympiadbench}, symbolic mathematics~\citep{luo2025deepscaler}, logical deduction~\citep{xie2025logic}, and program synthesis~\citep{luo2025deepcoder}.

However, the effectiveness of RLVR is significantly affected by the prevalence of ineffective training samples~\citep{yu2025dapo}. In practice, a substantial portion of sampled prompts produce response groups that are either entirely correct or entirely incorrect~\citep{zhang2025srpo,qu2025can}.
This leads to zero-variance rewards and limits the availability of informative training signals~\citep{bae2025online, chen2025self}. 
To alleviate this issue, recent state-of-the-art (SOTA) methods perform extensive LLM rollouts over multiple candidate batches to identify and retain effective samples~\citep{yu2025dapo, cui2025process, liu2025prorl}. Although effective at improving sample quality, this strategy incurs substantial computational overhead that often exceeds the cost of RL finetuning itself.

Considering this efficacy-efficiency dilemma, several alternative approaches have been explored. Predictive sampling methods estimate prompt success probabilities prior to rollout and prioritize prompts with intermediate success rates~\citep{qu2025can,shen2025bots,xu2025single}.
However, accurate prediction remains challenging, especially for rapidly evolving models or large-scale training datasets. Consequently, these approaches often achieve only limited improvements in effective sample ratios.
Another line of research leverages trajectory replay to strengthen training signals from difficult prompts, typically inserting previously successful trajectories into online response groups under specific conditions~\citep{lu2025arpo,zhang2025rlep,zhan2025exgrpo}. However, trajectory replay does not fundamentally resolve prompt inefficiency, leaving a substantial proportion of training samples ineffective. Moreover, the resulting policy optimization objectives may suffer from issues including intractable bias, suboptimal constraints, or invalid advantage estimates.

To overcome these limitations, we propose Group \underline{P}rioritized \underline{O}ff-\underline{P}olicy \underline{O}ptimization~(POPO), a simple yet effective approach that stably optimizes entire batches of effective data without incurring additional rollout overhead. POPO consists of two components: prioritized group replay and decoupled off-policy optimization. 
The prioritized group replay mechanism jointly accounts for sample quality and off-policiness. Specifically, it replaces ineffective on-policy groups, each comprising a prompt and its associated responses, with effective off-policy groups generated by recent policies. This design not only eliminates ineffective training data but also avoids issues inherent in trajectory replay, such as intractable bias and invalid advantage estimates. Furthermore, we present a decoupled importance-sampling strategy for stable off-policy optimization over replayed groups. While prior off-policy objectives either introduce bias by treating replayed samples as on-policy or impose suboptimal constraints toward the behavior policy, \algo corrects off-policy bias and ensures stable policy updates through consistent trust-region constraints across both on-policy and off-policy samples.

Empirically, we evaluate \algo on a diverse set of downstream reasoning tasks, including competition-level mathematics, numerical planning, and visual geometry. The results show that \algo substantially accelerates RL finetuning, achieving performance comparable to the SOTA method while requiring substantially fewer rollouts.
Moreover, \algo consistently improves reasoning performance across varying response group sizes and base RLVR algorithms, and remains complementary to existing sampling strategies, highlighting its broad applicability and practical utility for RLVR training.

\section{Preliminary}
\label{sec:preliminary}
\textbf{RL with Verifiable Rewards~(RLVR).}
Let $q \sim \mathcal{D}$ denote a prompt sampled from a dataset and $o \sim \pi_{\theta}(\cdot|q)$ a response generated by policy $\pi_{\theta}$. RLVR aims to maximize the expected reward:
\begin{equation}
\max_{\theta\in\Theta} \; \mathbb{E}_{q \sim \mathcal{D}, o \sim \pi_{\theta}(\cdot|q)} \left[ r(q, o) \right],
\end{equation}
where the reward function $r(q, o)$ typically verifies response correctness, assigning a value of $1$ to correct outputs and $0$ otherwise. Such rewards are simple, reliable, and less susceptible to reward hacking in domains such as mathematical reasoning and code generation.

\textbf{Group Relative Policy Optimization~(GRPO).}
To optimize the above objective, GRPO~\citep{shao2024deepseekmath} is a widely adopted policy gradient method that avoids value function modeling. At each training step, a batch of $B$ prompts is sampled, and for each prompt $q$, the policy generates $k$ responses $\{o_i\}_{i=1}^k$, forming a group $g^q = \bigl(q, \{o_i\}_{i=1}^k \bigr)$. GRPO then maximizes the following objective:
\begin{equation}
\mathcal{J}_{\text{GRPO}}(\theta)  = \mathbb{E}_{q, o_i \sim \pi_{\theta_{\text{old}}}(\cdot | q)} \Bigg[
\frac{1}{k}\sum_{i=1}^k \frac{1}{|o_i|}\sum_{t=1}^{|o_i|} \min \Big( \rho_{i, t}(\theta,\theta_{\text{old}}) \hat{A}_i, \clip(\rho_{i, t}(\theta,\theta_{\text{old}}), 1 \pm \epsilon) \hat{A}_i \Big)\Bigg],
\label{eq:GRPO}
\end{equation}
where $\rho_{i, t}(\theta,\theta_{\text{old}}) = \frac{\pi_{\theta}\left(o_{i, t} | q, o_{i,<t}\right)}{\pi_{\theta_{\text{old}}}\left(o_{i, t} | q, o_{i,<t}\right)}$.
The clipping term enforces proximity between $\pi_{\theta}$ and $\pi_{\theta_{\text{old}}}$ for stable updates. The advantages are computed via group-wise normalization:
\begin{equation}
\hat{A}_{i} = \frac{r(q,o_i)-\mean\big(\{r(q,o_j)\}_{j=1}^k\big)}{\std\big(\{r(q,o_j)\}_{j=1}^k\big)}.
\end{equation}
This formulation eliminates value modeling and has shown strong performance on reasoning tasks.

\section{Ineffective Samples and Prior Solutions}
\label{sec:ineffective data}
This section characterizes ineffective RLVR samples and reviews prior solutions and their limitations.

\textbf{Ineffective Samples.}
In RLVR, not all samples contribute equally to learning. When a prompt consistently yields either all-correct or all-incorrect responses, the resulting rewards provide little informative signal for model optimization~\citep{chen2025self, yu2025dapo, bae2025online}. For algorithms such as GRPO, this leads to vanishing normalized advantages and consequently stalled optimization.
\begin{definition}[Ineffective Sample]
\label{def:effective}
Given a prompt $q$ and its response group $g^q = \bigl(q, \{o_i\}_{i=1}^k \bigr)$, we define the sample as ineffective if the reward set exhibits zero variance: $\std\bigl(\{r(q,o_i)\}_{i=1}^k\bigr) = 0$.
\end{definition}
Both prior studies~\citep{zhang2025srpo,yu2025dapo,qu2025can} and our empirical findings (Fig.~\ref{fig:effective ratio}) indicate that, under standard RLVR training, a considerable portion of samples tend to be ineffective and contribute little to learning. This observation has motivated a growing body of research aimed at mitigating such inefficiencies.

\textbf{Active Sampling.}
A common approach is to prioritize informative prompts via active sampling~\citep{zhang2025srpo, chen2025self, yu2025dapo, bae2025online}. A representative SOTA method is DAPO \citep{yu2025dapo}, which performs rollouts over an expanded candidate prompt batch $\hat{\mathcal{Q}}_t$ and filters out prompts whose responses yield identical rewards:
\begin{equation}
\mathcal{Q}_t = \left\{ q \in \hat{\mathcal{Q}}_t \;\middle|\;
\std\bigl(\{r(q,o_i)\}_{i=1}^k\bigr) > 0 \right\}.
\label{eq:ds}
\end{equation}
While effective at improving sample quality, this strategy incurs substantial computational overhead due to repeated LLM rollouts over a larger candidate pool.
The additional rollout cost can dominate overall training, particularly for reasoning tasks involving long CoT outputs.

To reduce rollout overhead, predictive sampling methods estimate prompt success probabilities and prioritize prompts with intermediate success rates before executing rollouts~\citep{qu2025can,shen2025bots,xu2025single}. However, accurate prediction remains challenging, particularly when (i) prompt success rates shift rapidly as the policy evolves, or (ii) historical information is sparse in low-epoch or large-scale dataset settings. As a result, these methods often yield only limited improvements in effective sample ratios. Moreover, prioritizing intermediate-difficulty prompts may systematically exclude highly challenging samples, which are important for improving the upper bound of reasoning capability~\citep{yan2025learning}.

\textbf{Trajectory Replay.}
Another line of work improves data efficiency through replay mechanisms. These methods typically store historical successful responses in a prompt-specific buffer and insert them into online response groups under specific conditions~\citep{lu2025arpo,zhang2025rlep,zhan2025exgrpo,zhang2025improving}, such as when all online rollouts fail~\citep{lu2025arpo,zhang2025improving}. Although effective at amplifying sparse success signals in difficult tasks, trajectory replay does not fully resolve the problem of ineffective samples. In particular, samples remain ineffective when (i) all online responses are correct, or (ii) no successful trajectory has ever been obtained for extremely hard prompts or in large-scale dataset settings.

Moreover, trajectory replay introduces additional challenges for policy optimization. Many methods directly apply on-policy objectives to replayed samples~\citep{lu2025arpo,zhang2025rlep}, leading to biased optimization when the behavior policy differs from the current policy. 
Recent work~\citep{zhan2025exgrpo,zhang2025improving} has attempted to correct this bias via importance sampling with respect to the behavior policy that generated the replayed sample. However, this remains imperfect as selectively replayed trajectories are drawn from a filtered distribution rather than the true behavior policy, introducing potentially intractable bias. Furthermore, constraining updates toward the behavior policy can lead to inconsistent trust-region constraints and may overly restrict policy improvement. Finally, mixing off-policy trajectories within groups invalidates normalized advantage estimates, as they no longer correspond to any well-defined policy.

\section{Group Prioritized Off-Policy Optimization}
To address the limitations of prior methods, we propose Group Prioritized Off-Policy Optimization (POPO), a simple yet effective framework that constructs fully effective training batches via replay, enabling stable off-policy optimization without additional rollout cost. POPO consists of two components: prioritized group replay (\cref{sec:prioritized group replay}) and decoupled off-policy optimization (\cref{sec:off-policy optimization}).

\subsection{Prioritized Group Replay}
\label{sec:prioritized group replay}

Existing trajectory replay methods face the aforementioned challenges primarily because they retain all sampled prompts and operate at the response level. Since extremely easy or hard prompts provide limited learning signals~\citep{chen2025self,qu2025can}, manipulating responses alone is often insufficient and may further introduce intra-group policy inconsistency. We therefore propose prioritized group replay, a group-level mechanism that replaces ineffective prompt-response groups with historical groups according to both quality and off-policiness criteria.

\textbf{Quality of Groups.}
Partially correct groups are generally of higher quality as they provide stronger learning signals~\citep{bae2025online,chen2025self}. 
At each training step, we sample a batch of $B$ prompts $\mathcal{Q}_{\text{on}} \subset \mathcal{D}$ and perform online rollouts, yielding online groups $\mathcal{G}_{\text{on}} = \left\{ g^q \;\middle|\; q \in \mathcal{Q}_{\text{on}} \right\}$. From these online groups, we retain only effective groups with non-zero reward variance:
\begin{equation}
\mathcal{G}_{\text{on-eff}} = \left\{ g^q \in \mathcal{G}_{\text{on}} \;\middle|\; \mathrm{std}\bigl(\{r(q,o_i)\}_{i=1}^k\bigr) > 0 \right\}.
\end{equation}
Similar to DAPO, this filtering step reduces the training batch size. Instead of continually rolling out additional prompts, we replenish the batch using a replay buffer $\mathcal{B}$ that stores previously observed effective groups.
As $|\mathcal{G}_{\text{on-eff}}| \leq B$, we sample $B - |\mathcal{G}_{\text{on-eff}}|$ effective groups from $\mathcal{B}$ according to a specified sampling strategy to form the off-policy effective group set $\mathcal{G}_{\text{off-eff}}$:
\begin{equation}
\mathcal{G}_{\text{off-eff}} = 
\mathrm{Sample}\!\left(\mathcal{B},\, B - |\mathcal{G}_{\text{on-eff}}|\right).
\end{equation}
The training batch is constructed as $\mathcal{G}_{\text{train}} =
\mathcal{G}_{\text{on-eff}} \,\cup\, \mathcal{G}_{\text{off-eff}}$.
This design ensures that every training batch consists entirely of effective groups, without requiring rollout-intensive rejection schemes such as DAPO.
Compared with predictive sampling methods, it also preserves exploration by sampling prompts uniformly rather than concentrating on specific types, such as medium-difficulty prompts.

\textbf{Off-policiness of Groups.}
While the quality criterion determines which groups enter the buffer, the off-policiness criterion determines which groups are replayed.
To mitigate distribution shift, replayed groups should be generated by policies close to the current policy. A direct approach would be to compute KL divergence for all candidate groups and then perform filtering, which is computationally expensive for LLMs.
Fortunately, under proximal policy updates, policies trained in more recent steps are expected to remain closer to the current policy. Lemma~\ref{lem:off-policiness} formalizes this intuition by showing that the total variation bound between policies grows linearly with the update-step difference.
\begin{lemma}
Under proximal policy updates, assume that $1-\epsilon \leq \frac{\pi_{i+1}(a|s)}{\pi_{i}(a|s)} \leq 1+\epsilon$ for all steps $i$ and state-action pairs $(s,a)$. Then, for any step difference $n\ge 1$, $\mathrm{TV}\big(\pi_{i}(\cdot|s),\pi_{i+n}(\cdot|s)\big) \le \frac{\epsilon n}{2}$.
\label{lem:off-policiness}
\end{lemma}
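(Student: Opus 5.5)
The argument has two parts: bound the total variation between consecutive policies, then chain the bound along the update path with the triangle inequality.

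\textbf{Single step.} Fix a state $s$ and an index $j$. Write the total variation as
\[
\mathrm{TV}\big(\pi_j(\cdot|s),\pi_{j+1}(\cdot|s)\big)=\tfrac12\sum_a \big|\pi_{j+1}(a|s)-\pi_j(a|s)\big|
\]
(with an integral in place of the sum for continuous action spaces). Factor each term as
\[
\big|\pi_{j+1}(a|s)-\pi_j(a|s)\big| = \pi_j(a|s)\,\Big|\tfrac{\pi_{j+1}(a|s)}{\pi_j(a|s)}-1\Big|.
\]
The ratio assumption gives $\big|\pi_{j+1}(a|s)/\pi_j(a|s)-1\big|\le\epsilon$. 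Summing over $a$ and using $\sum_a\pi_j(a|s)=1$ yields
\[
\mathrm{TV}\big(\pi_j(\cdot|s),\pi_{j+1}(\cdot|s)\big)\le \tfrac{\epsilon}{2}.
\]

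\textbf{Support issue.} Actions with $\pi_j(a|s)=0$ must be handled separately. Read as a well-defined ratio bound, the assumption forces $\pi_{j+1}(a|s)=0$ whenever $\pi_j(a|s)=0$: the supports are nested, since $\pi_{j+1}\le(1+\epsilon)\pi_j$. Such actions therefore contribute nothing to the sum, and the factorization above is valid on the support of $\pi_j$. This is the only point requiring care; the rest is routine.

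\textbf{Chaining.} Total variation is a metric on distributions over the action space (half the $\ell_1$ norm), so the triangle inequality gives
\[
\mathrm{TV}\big(\pi_i(\cdot|s),\pi_{i+n}(\cdot|s)\big)\le\sum_{j=i}^{i+n-1}\mathrm{TV}\big(\pi_j(\cdot|s),\pi_{j+1}(\cdot|s)\big)\le \frac{\epsilon n}{2}.
\]
The bound is uniform in $s$, which completes the proof. Equivalently, one can argue by induction on $n$: the base case $n=1$ is the single-step bound, and the inductive step is one application of the triangle inequality.

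Note that chaining the ratio bounds multiplicatively, e.g.\ $(1+\epsilon)^n$, would only give a bound exponential in $n$. The stated linear bound needs the additive chaining through total variation used above.
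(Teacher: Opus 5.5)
Your proof is correct and follows the paper's argument essentially step for step: you bound one step by $\mathrm{TV}\le\epsilon/2$ by factoring $|\pi_{j+1}-\pi_j|=\pi_j\,|\pi_{j+1}/\pi_j-1|$ and using normalization, then chain the steps with the triangle inequality for total variation. Your explicit treatment of actions with $\pi_j(a|s)=0$ is a small rigor point that the paper leaves implicit.
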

Motivated by this observation, we adopt a recency-based replay strategy. Specifically, we maintain the replay buffer $\mathcal{B}$ as a first-in-first-out (FIFO) queue and replay the most recently stored groups. Let $\mathcal{B} = (g_1, g_2, \ldots, g_{|\mathcal{B}|})$ denote the buffer ordered by insertion time. To control off-policiness, we deterministically select groups from the tail of the buffer:
\begin{equation}
\mathcal{G}_{\text{off-eff}} =
\left\{
g_{|\mathcal{B}|-i+1} \in \mathcal{B}
\;\middle|\;
i = 1, \ldots, B - |\mathcal{G}_{\text{on-eff}}|
\right\}.
\end{equation}
This strategy provides a lightweight approximation to explicit KL-based filtering without additional LLM likelihood computation. Moreover, since at most $B$ groups are replayed per step, the buffer capacity can be set to $B$, incurring minimal memory overhead.
After constructing the training batch, the replay buffer is updated by including all effective online groups at the current step: $\mathcal{B} \leftarrow \mathcal{B} \cup \mathcal{G}_{\text{on-eff}}$.

\subsection{Decoupled Off-Policy Optimization}
\label{sec:off-policy optimization}

Although prioritized replay limits the off-policy gap, a residual discrepancy remains. We introduce decoupled off-policy optimization to correct this bias while preserving stable trust-region updates.

Let $\pi_{\beta}$ denote the behavior policy that generated a replayed sample, i.e., the LLM policy at the time the sample was collected.
A straightforward approach, adopted by several existing replay-based 
methods~\citep{lu2025arpo,zhang2025rlep}, is to treat off-policy data as if they were on-policy. This yields an objective that shares the same form as 
$\mathcal{J}_{\text{GRPO}}$, but with the expectation taken under $\pi_{\beta}$ rather than $\pi_{\theta_{\text{old}}}$:
\begin{equation}
\mathcal{J}_{\text{off-}\pi_{\text{old}}}(\theta)  = \mathbb{E}_{q, o_i \sim \pi_{\beta}(\cdot | q)} \Bigg[
\frac{1}{k}\sum_{i=1}^k \frac{1}{|o_i|}\sum_{t=1}^{|o_i|} \min \Big( \rho_{i, t}(\theta,\theta_{\text{old}}) \hat{A}_i, \clip(\rho_{i, t}(\theta,\theta_{\text{old}}), 1 \pm \epsilon) \hat{A}_i \Big) \Bigg].
\label{eq:off_piold}
\end{equation}
However, the objective becomes biased whenever $\pi_{\beta} \neq \pi_{\theta_{\text{old}}}$. In typical replay scenarios, the behavior policy $\pi_{\beta}$ may deviate substantially from the current policy $\pi_{\theta_{\text{old}}}$, which introduces systematic bias into the optimization and can degrade policy updates.

To correct this off-policy mismatch, a second line of work~\citep{zhan2025exgrpo,sun2025improving,liang2025squeeze} replaces the denominator in the importance ratio with the behavior policy $\pi_{\beta}$, yielding the following objective:
\begin{equation}
\mathcal{J}_{\text{off-}\pi_\beta}(\theta) = \mathbb{E}_{q, o_i \sim \pi_{\beta}(\cdot | q)} \Bigg[
\frac{1}{k}\sum_{i=1}^k \frac{1}{|o_i|}\sum_{t=1}^{|o_i|} \min  \Big( \rho_{i, t}(\theta,\beta) \hat{A}_i, \clip(\rho_{i, t}(\theta,\beta), 1 \pm \epsilon) \hat{A}_i \Big) \Bigg],
\end{equation}
where we generalize the notation $\rho$ and define, for any $\pi_{\theta_1}$ and $\pi_{\theta_2}$, $\rho_{i, t}(\theta_1,\theta_2) = \frac{\pi_{\theta_1}\left(o_{i, t} | q, o_{i,<t}\right)}{\pi_{\theta_2}\left(o_{i, t} | q, o_{i,<t}\right)}$.
However, by constraining the optimized policy $\pi_\theta$ toward the behavior policy $\pi_\beta$ through its clipping operation, $\mathcal{J}_{\text{off-}\pi_\beta}$ introduces three issues. (i) Suboptimal constraint: $\pi_{\beta}$ is usually inferior to the current policy, so constraining toward it may impair performance. (ii) Unstable updates: Unlike on-policy updates that consistently constrain $\pi_\theta$ toward $\pi_{\theta_{\text{old}}}$, this objective enforces proximity to different behavior policies across replayed samples, disrupting the standard trust-region dynamics and undermining its stability advantage. (iii) Less informative signals: Since $\pi_\theta$ generally deviates much more substantially from $\pi_\beta$ than from $\pi_{\theta_{\text{old}}}$, the objective causes clipping to occur more frequently, thereby reducing the informativeness of the gradient signal.

These issues stem from using $\pi_\beta$ as the proximal policy that constrains the optimized policy. To address them, we aim to retain $\pi_{\theta_{\text{old}}}$ as the proximal policy in the clipping ratio, even when optimizing over off-policy samples drawn from different $\pi_\beta$.
To this end, we adopt a decoupled importance sampling scheme that factorizes the correction term $\rho_{i, t}(\theta,\beta)$ into the product of two importance weights, $\rho_{i, t}(\theta,\theta_{\text{old}})$ and $\rho_{i, t}(\theta_{\text{old}},\beta)$, and optimize the following decoupled off-policy objective:
\begin{equation}
\small
\mathcal{J}_{\text{off-dec}}(\theta)  = \underset{q, o_i \sim \pi_{\beta}}{\mathbb{E}} \Bigg[
\frac{1}{k}\sum_{i=1}^k \frac{1}{|o_i|}\sum_{t=1}^{|o_i|} \rho_{i, t}(\theta_{\text{old}},\beta)  \min \Big( \rho_{i, t}(\theta,\theta_{\text{old}}) \hat{A}_i, \clip(\rho_{i, t}(\theta,\theta_{\text{old}}), 1 \pm \epsilon) \hat{A}_i \Big) \Bigg].
\label{eq:J_off-token}
\end{equation}
The objective $\mathcal{J}_{\text{off-dec}}$ preserves the same off-policy correction as $\mathcal{J}_{\text{off-}\pi_\beta}$ while modifying only the trust-region constraint. The following lemma formalizes this connection.
\begin{lemma}
\label{lem:decoupled}
$\mathcal{J}_{\text{off-dec}}$ and $\mathcal{J}_{\text{off-}\pi_\beta}$ differ only in their trust-region constraints:
the former clips $\rho_{i,t}(\theta,\theta_{\text{old}})$, whereas the latter clips $\rho_{i,t}(\theta,\beta)$.
For any token at which neither ratio is clipped, the two objectives yield identical values and gradients.
\end{lemma}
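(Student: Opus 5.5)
The argument works token by token, comparing the two integrands for a fixed sampled response $o_i$, token $t$, and advantage $\hat{A}_i$. Both objectives take the expectation over the same distribution $q, o_i \sim \pi_{\beta}$ and use the same normalization $\frac{1}{k}\sum_i \frac{1}{|o_i|}\sum_t$. It therefore suffices to compare the per-token terms
\[
\ell_{\text{dec}} = \rho_{i,t}(\theta_{\text{old}},\beta)\,\min\Big(\rho_{i,t}(\theta,\theta_{\text{old}})\hat{A}_i,\ \clip(\rho_{i,t}(\theta,\theta_{\text{old}}),1\pm\epsilon)\hat{A}_i\Big)
\]
and
\[
\ell_{\beta} = \min\Big(\rho_{i,t}(\theta,\beta)\hat{A}_i,\ \clip(\rho_{i,t}(\theta,\beta),1\pm\epsilon)\hat{A}_i\Big).
\]
The first observation is purely algebraic. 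Because the token probabilities are strictly positive, the ratios factor as
\[
\rho_{i,t}(\theta,\beta)=\rho_{i,t}(\theta,\theta_{\text{old}})\,\rho_{i,t}(\theta_{\text{old}},\beta).
\]
This already shows that both integrands are built on the same unclipped importance weight, and that the only place they differ is which ratio enters the $\clip$ operator. That is the first claim of the lemma.

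Next I would make precise what ``not clipped'' means for a PPO-style term $\min(x\hat{A},\clip(x,1\pm\epsilon)\hat{A})$: the minimum is attained by the unclipped branch $x\hat{A}$. This is the case whenever $x\in[1-\epsilon,1+\epsilon]$. It is also the case when $x$ is outside that interval on the side where the unclipped value is smaller: $x<1-\epsilon$ with $\hat{A}>0$, or $x>1+\epsilon$ with $\hat{A}<0$. In all these cases the term equals $x\hat{A}$ on an open neighborhood of $\theta$, up to measure-zero boundary points where I would take one-sided or subgradient conventions.

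Under this definition, suppose neither ratio is clipped at the token. Then
\[
\ell_{\text{dec}}=\rho_{i,t}(\theta_{\text{old}},\beta)\,\rho_{i,t}(\theta,\theta_{\text{old}})\,\hat{A}_i=\rho_{i,t}(\theta,\beta)\,\hat{A}_i=\ell_{\beta},
\]
so the values agree. For the gradients, the factors $\rho_{i,t}(\theta_{\text{old}},\beta)$ and $\hat{A}_i$ do not depend on $\theta$, since $\theta_{\text{old}}$ and $\beta$ are fixed (stop-gradient). Hence both gradients equal
\[
\hat{A}_i\,\rho_{i,t}(\theta,\beta)\,\nabla_\theta\log\pi_\theta(o_{i,t}\mid q,o_{i,<t}).
\]
Summing over tokens and taking the common expectation gives the claimed identity on the set of unclipped tokens.

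The only delicate step is the gradient claim. Equality of values at a single point does not by itself give equality of gradients; what is needed is that each $\min/\clip$ expression coincides with its unclipped branch locally. I would handle this by requiring the unclipped region to hold in a neighborhood, which is automatic for interior points of the strict inequalities, and by noting that boundary ties form a measure-zero set where either branch's gradient is a valid subgradient.
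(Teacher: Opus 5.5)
Your proposal is correct and follows essentially the same route as the paper. The shared steps are: factor $\rho_{i,t}(\theta,\beta)=\rho_{i,t}(\theta,\theta_{\text{old}})\,\rho_{i,t}(\theta_{\text{old}},\beta)$, reduce each $\min/\clip$ term to its unclipped branch, and use that $\rho_{i,t}(\theta_{\text{old}},\beta)$ and $\hat{A}_i$ do not depend on $\theta$ to match the gradients. Your version is somewhat more careful than the paper's: it allows a broader notion of ``not clipped'' (including the pessimistic side outside $[1-\epsilon,1+\epsilon]$) and handles gradients locally at kinks, whereas the paper simply takes both ratios in the closed interval and differentiates directly.
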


This formulation decouples the behavior policy that generates the data from the proximal policy that constrains the update, enabling POPO to maintain a consistent, high-quality trust-region constraint across both on-policy and off-policy data, and thereby enhance training stability. While the decoupled formulation originates from classical RL for batch-size-invariant optimization~\citep{hilton2022batch}, and has recently been adopted in LLM training to address precision mismatch between rollout and training engines~\citep{zheng2025stabilizing,fu2025areal}, this work repurposes it for replay-based off-policy optimization with stable trust-region updates.

\subsection{Overall Algorithm}
This section summarizes our final algorithm, \algo.
As described in \cref{sec:prioritized group replay}, the final training batch $\mathcal{G}_{\text{train}}$ consists of the online effective group set $\mathcal{G}_{\text{on-eff}}$ and the off-policy effective group set $\mathcal{G}_{\text{off-eff}}$. For samples in $\mathcal{G}_{\text{on-eff}}$, we perform standard GRPO training by Eq.~\eqref{eq:GRPO}, while for samples in $\mathcal{G}_{\text{off-eff}}$, we apply decoupled off-policy optimization using Eq.~\eqref{eq:J_off-token}. In practice, this only requires multiplying the standard policy loss of each replayed sample by $\rho_{i, t}(\theta_{\text{old}},\beta)$ before loss aggregation.
Therefore, \algo is easy to implement and introduces neither additional computational overhead nor modeling complexity compared to standard RLVR methods.
The complete algorithm is presented in \cref{algo}, and an overview of the framework is illustrated in Fig.~\ref{fig:framework}.

\begin{algorithm}[tb]
  \caption{Group Prioritized Off-Policy Optimization}
  \label{algo}
  \begin{algorithmic}
    \STATE {\bfseries Input:} Prompt dataset $\mathcal{D}$; Batch size $B$; Group size $k$; LLM $\pi_{\theta}$; Total training steps $T$.
    \STATE Initilize a FIFO replay buffer $\mathcal{B} = \varnothing$ with capacity $B$;
    \FOR{$t=1$ {\bfseries to} $T$}

    \STATE Sample a batch of prompts $\mathcal{Q}_{\text{on}}$ from $\mathcal{D}$\;
    \STATE Generate $k$ responses per $q \in \mathcal{Q}_{\text{on}}$ using $\pi_{\theta}$ to obtain the online group set $\mathcal{G}_{\text{on}} = \left\{ g^q \middle| q \in \mathcal{Q}_{\text{on}} \right\}$\;
    \STATE Remove group $g^q$ from $\mathcal{G}_{\text{on}}$ if $\text{std}\left(\{r(q,o_i)\}_{i=1}^k\right) = 0$ to obtain the online effective set $\mathcal{G}_{\text{on-eff}}$\;
    \STATE Construct the off-policy effective set $\mathcal{G}_{\text{off-eff}} = \left\{g_{|\mathcal{B}|-i+1} \in \mathcal{B}\;\middle|\;i = 1, \ldots, B - |\mathcal{G}_{\text{on-eff}}|\right\}$\;
    \STATE Update the LLM $\pi_{\theta}$ using groups from $\mathcal{G}_{\text{on-eff}}$ and $\mathcal{G}_{\text{off-eff}}$ by Eq.~\eqref{eq:GRPO} and \eqref{eq:J_off-token}, respectively\;
    \STATE Add groups $g^q \in \mathcal{G}_{\text{on-eff}}$ to the FIFO replay buffer $\mathcal{B}$\;
    \ENDFOR
  \end{algorithmic}
\end{algorithm}

\section{Experiments}
\label{sec:experiments}
In this section, we conduct a series of experiments to examine the effectiveness of \algo. Appendices \ref{appsec:implementation}, \ref{appsec:results}, and \ref{appsec:dataexample} provide experimental details, extended results, and data examples, respectively.

\subsection{Experimental Setup}
We evaluate POPO across three challenging reasoning domains, using diverse model sizes and architectures. Performance is measured by average pass@1 over multiple rollouts per problem.

\textbf{Mathematics.}
Following prior work~\citep{luo2025deepscaler,qu2025can}, we train DeepSeek-R1-Distill-Qwen-1.5B (DSR-1.5B) and 7B (DSR-7B)~\citep{guo2025deepseek} on the DeepScaleR dataset~\citep{luo2025deepscaler}, which comprises 40k competition-level math problems. Evaluation is conducted on a suite of standard mathematical benchmarks: AIME25, AMC23, MATH500~\citep{lightman2023let}, Minerva Math (Minerva.)~\citep{lewkowycz2022solving}, and OlympiadBench (Olymp.)~\citep{he2024olympiadbench}. 
To examine out-of-distribution (OOD) generalization, we further use general reasoning benchmarks including MMLU-Pro~\citep{wang2024mmlu}, ARC-c~\citep{clark2018think}, and GPQA-diamond (GPQA)~\citep{rein2024gpqa}.

\textbf{Numerical Planning.}
We adopt the Countdown Numbers Game to evaluate arithmetic planning. Following prior work~\cite{chen2025self,qu2025can}, we train Qwen2.5-3B~\citep{yang2024qwen2} on a 20k subset of the Countdown-34 (CD-34) dataset~\citep{tinyzero}, and evaluate on its held-out split and a harder variant, Countdown-4 (CD-4).

\textbf{Visual Geometry.}
Following prior work~\citep{qu2025can}, we train the vision-language model Qwen2.5-VL-3B-Instruct~\citep{bai2025qwen2} on the Geometry3k training set~\citep{lu2021inter, geometry3k_dataset}, and evaluate on its test split. The dataset contains 3k visual-geometric problems that require both image understanding and symbolic reasoning.

\textbf{Baselines.}
We compare against the following representative baselines.
(i) GRPO: the GRPO algorithm with the clip higher technique~\citep{yu2025dapo}. (ii) DAPO: a resource-intensive approach that oversamples and filters prompts using rollout feedback to form fully effective online batches~\citep{yu2025dapo}. (iii) MoPPS: a predictive sampling method that predicts effective prompts using Bayesian inference~\citep{qu2025can}. (iv) ARPO: a trajectory replay method that reuses historical correct responses when online groups contain only incorrect ones~\citep{lu2025arpo}.
On DSR-1.5B, we additionally include the off-policy baselines ReMix~\citep{liang2025squeeze}, REPO~\citep{li2025repo}, RR~\citep{sun2025improving}, and the predictive sampling method GRESO~\citep{zheng2025act}. Our primary goal is reducing computational overhead relative to DAPO, rather than surpassing its accuracy.

\begin{figure}[t]
    \centering
    \includegraphics[width=\linewidth]{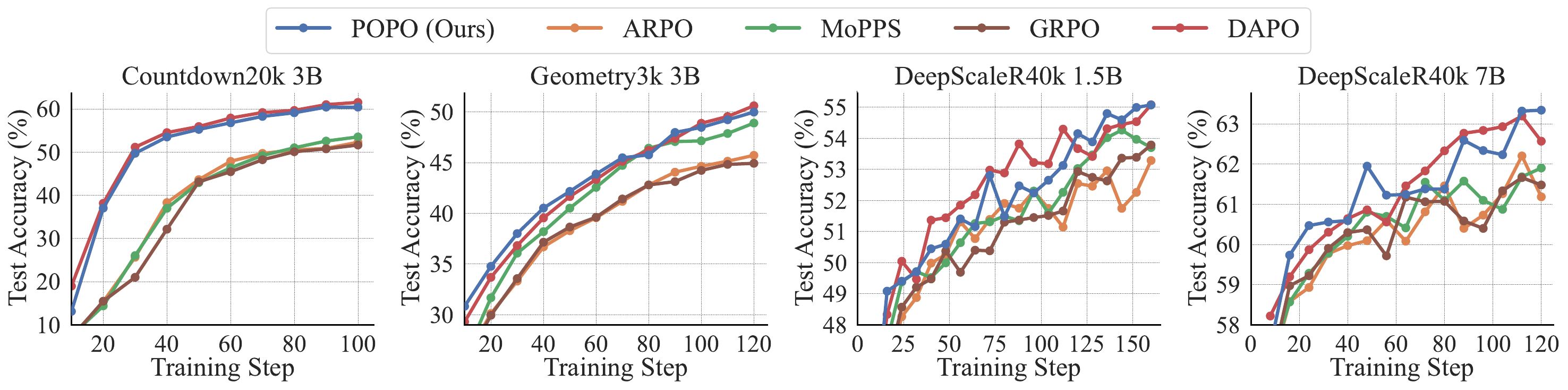}
    \vspace{-15pt}
    \caption{
    Training curves of different methods across three reasoning tasks with varying model sizes. 
    Notably, DAPO is included as a high-resource baseline as it requires substantially more LLM rollouts per training step than competing methods.
    }
    \vspace{-7pt}
    \label{fig:performance}
\end{figure}

\begin{figure}[t]
    \centering
    \includegraphics[width=\linewidth]{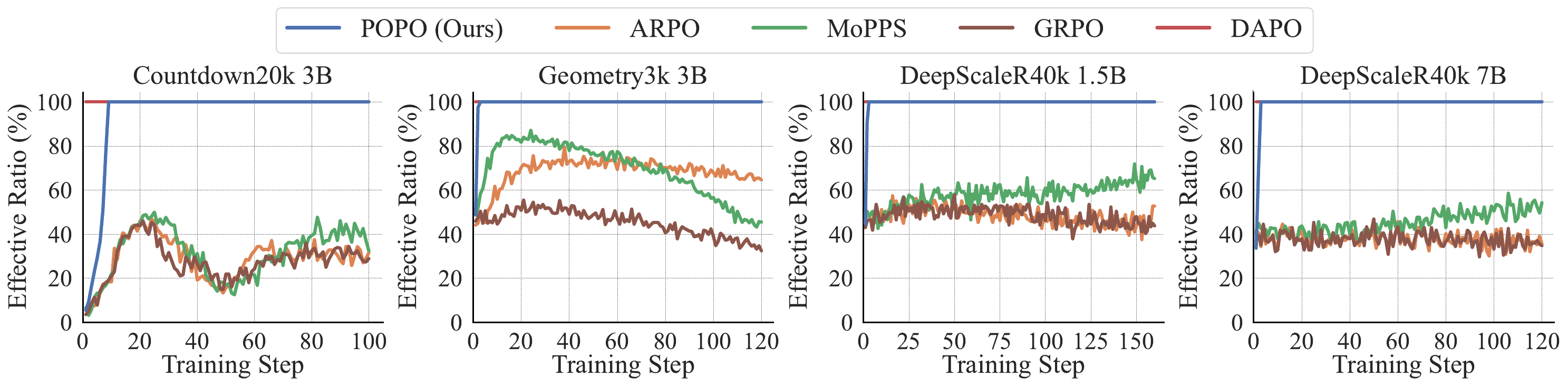}
    \vspace{-15pt}
    \caption{
    Proportion of effective samples within training batches across different RLVR methods.
    }
    \vspace{-7pt}
    \label{fig:effective ratio}
\end{figure}

\subsection{RLVR Performance and Efficiency}

\textbf{Training Progress.}
Fig.~\ref{fig:performance} presents training curves across tasks and models, plotting test accuracy against training steps. \algo consistently yields faster improvement and higher final accuracy than GRPO, ARPO, and MoPPS, while performing on par with the computationally expensive baseline DAPO. \algo also reaches GRPO's final performance with only 40\%–70\% of training steps.
To understand how training data quality contributes to this gap, Fig.~\ref{fig:effective ratio} reports the effective sample ratio (i.e., the proportion of effective data in training batches) of each method throughout training. 
While \algo and DAPO optimize full batches of effective samples, GRPO typically maintains effective ratios below 50\% across tasks. 
Consistent with our analysis in \cref{sec:ineffective data}, MoPPS and ARPO improve the effective ratio mainly on small datasets: on Geometry (3k), their effective ratios are markedly higher; this advantage diminishes on larger datasets such as DeepScaleR (40k) and Countdown (20k), where limited per-prompt history weakens their predictive or replay mechanisms, causing performance similar to GRPO. Notably, although ARPO achieves higher effective ratios on Geometry, its performance gain is limited, possibly due to inaccurate advantage estimation.

\begin{table*}[t]
  \centering
  \caption{
Evaluation on mathematics benchmarks.
‘+’ denotes finetuning with the method.
}
\vspace{-5pt}
  \resizebox{\linewidth}{!}{
  \setlength{\tabcolsep}{4pt}
    \begin{tabular}{rcccccc|cccc|cc}
    \toprule
    \multicolumn{1}{c}{\multirow{3}{*}{Method}} & \multicolumn{6}{c|}{\textbf{In-Distribution}}          & \multicolumn{4}{c|}{\textbf{Out-of-Distribution}} &\\
\cmidrule{2-11}          & \small{AIME25} & \small{AMC23} & \small{Minerva.} & \small{MATH500} & \small{Olymp.} & \small{Avg.} & \small{MMLU-P} & \small{ARC-c} & \small{GPQA}  & \small{Avg.} & \small{Runtime}
\\
\midrule
DSR-1.5B & 19.2 & 57.0 & 23.5 & 78.0 & 38.2 & 43.2 & 21.1 & 42.2 & 23.0 & 28.8 & - \\
+GRPO & 26.9 & 76.1 & 30.8 & 86.2 & 49.0 & 53.8 & 22.3 & 45.4 & 27.5 & 31.7 & 16h \\
+ARPO  & 27.7 & 76.7 & 28.8 & 86.2 & 48.1 & 53.3 & 22.3 & 45.3 & 26.3 & 31.3 & 16h \\
+GRESO & 25.6 & 77.1 & 30.7 & 86.2 & 50.6 & 54.1 & 24.7 & 46.7 & 26.4 & 32.6 & 27h \\
+ReMix & 27.4 & 79.4 & 30.0 & 85.4 & 51.0 & 54.5 & 22.5 & 45.3 & 27.7 & 31.8 & 16h \\
+RR & 26.5 & 76.7 & 31.0 & 85.0 & 49.6 & 53.8 & 21.2 & 45.4 & 27.1 & 31.2 & 14h \\
+REPO & 28.3 & 77.5 & 30.3 & 86.4 & 50.9 & 54.6 & 21.9 & 44.9 & 27.4 & 31.4 & 19h \\
+MoPPS & 28.4 & 77.7 & 29.9 & 85.2 & 50.1 & 54.3 & 21.4 & 44.6 & 27.5 & 31.2 & 17h \\
+DAPO & 27.5 & 79.2 & 30.4 & 87.2 & 51.0 & \textbf{55.1} & 24.5 & 46.7 & 26.8 & \underline{32.7} & 30h \\
\rowcolor{myhighlight}
+\algo & 29.9 & 77.9 & 29.9 & 86.2 & 51.5 & \textbf{55.1} & 22.6 & 45.9 & 30.3 & \textbf{32.9} &16h \\
\midrule
DSR-7B & 30.2 & 76.5 & 35.8 & 87.2 & 47.5 & 55.4 & 50.5 & 76.1 & 19.2 & 48.6 & - \\
+GRPO & 35.2 & 87.7 & 38.1 & 92.4 & 54.9 & 61.7 & 50.9 & 75.6 & 23.8 & 50.1 & 31h \\
+ARPO & 36.4 & 87.5 & 38.6 & 91.4 & 57.1 & 62.2 & 51.6 & 75.7 & 24.4 & 50.6 & 31h \\
+MoPPS & 33.3 & 88.0 & 39.2 & 92.2 & 56.8 & 61.9 & 51.6 & 76.7 & 25.4 & \textbf{51.2} & 32h \\
+DAPO & 39.7 & 89.1 & 37.9 & 92.2 & 57.1 & \underline{63.2} & 52.0 & 76.9 & 23.4 & 50.8 & 55h \\
\rowcolor{myhighlight}
+\algo & 39.4 & 88.9 & 38.0 & 92.6 & 57.7 & \textbf{63.3} & 52.1 & 76.3 & 25.2 & \textbf{51.2} & 34h \\
\bottomrule
    \end{tabular}
    }
  \label{tab:matheval}
\end{table*}

\begin{table}[htbp]
\vspace{-3mm}
\centering
\caption{Evaluation on Countdown and Geometry. ‘+’ denotes finetuning with the method.}
\resizebox{0.98\linewidth}{!}{
\setlength{\tabcolsep}{6.6pt}
\begin{tabular}{rccccc|rccc}
\toprule
\multicolumn{1}{c}{\multirow{3}{*}{Method}} & \multicolumn{5}{c|}{Countdown} & \multicolumn{1}{c}{\multirow{3}{*}{Method}} & \multicolumn{3}{c}{Geometry} \\
\cmidrule(lr){2-6} \cmidrule(lr){8-10}
& CD4 & CD34 & Avg. & Runtime & Rollouts
&  & Avg. & Runtime & Rollouts \\
\midrule
Qwen-3B
& 0.5 & 2.6 & 1.5 & - & -
& Qwen-VL-3B & 24.6 & - & - \\
+GRPO 
& 39.0 & 64.3 & 51.6 & 2.8h & \textbf{205k}
& +GRPO & 44.9 & 6.8h & \textbf{492k} \\
+ARPO 
& 39.7 & 64.8 & 52.3 & 2.9h & \textbf{205k}
& +ARPO & 45.7 & 6.5h & \textbf{492k} \\
+MoPPS 
& 40.7 & 66.3 & 53.5 & 3.4h & \textbf{205k}
& +MoPPS & 48.9 & 7.4h & \textbf{492k} \\
+DAPO 
& 49.8 & 73.2 & \textbf{61.5} & 5.6h & 877k
& +DAPO & \textbf{50.6} & 11.2h & 1438k \\
\rowcolor{myhighlight}
+POPO 
& 48.7 & 72.1 & \underline{60.4} & 3.2h & \textbf{205k}
& +POPO & \underline{50.0} & 6.8h & \textbf{492k} \\
\bottomrule
\end{tabular}
}
\label{tab:cd geo eval}
\end{table}

\textbf{Generalization Performance.}
We evaluate the trained models across multiple challenging benchmarks to assess their generalization capabilities. For models trained on DeepScaleR, \cref{tab:matheval} reports results on both standard in-distribution mathematical benchmarks and OOD general reasoning benchmarks. \cref{tab:cd geo eval} presents evaluations on Countdown and Geometry. Across all benchmarks, \algo consistently matches the performance of DAPO while outperforming the other baselines.

\begin{wrapfigure}{r}{7cm}
    \vspace{-13.5pt}
    \centering
    \includegraphics[width=\linewidth]{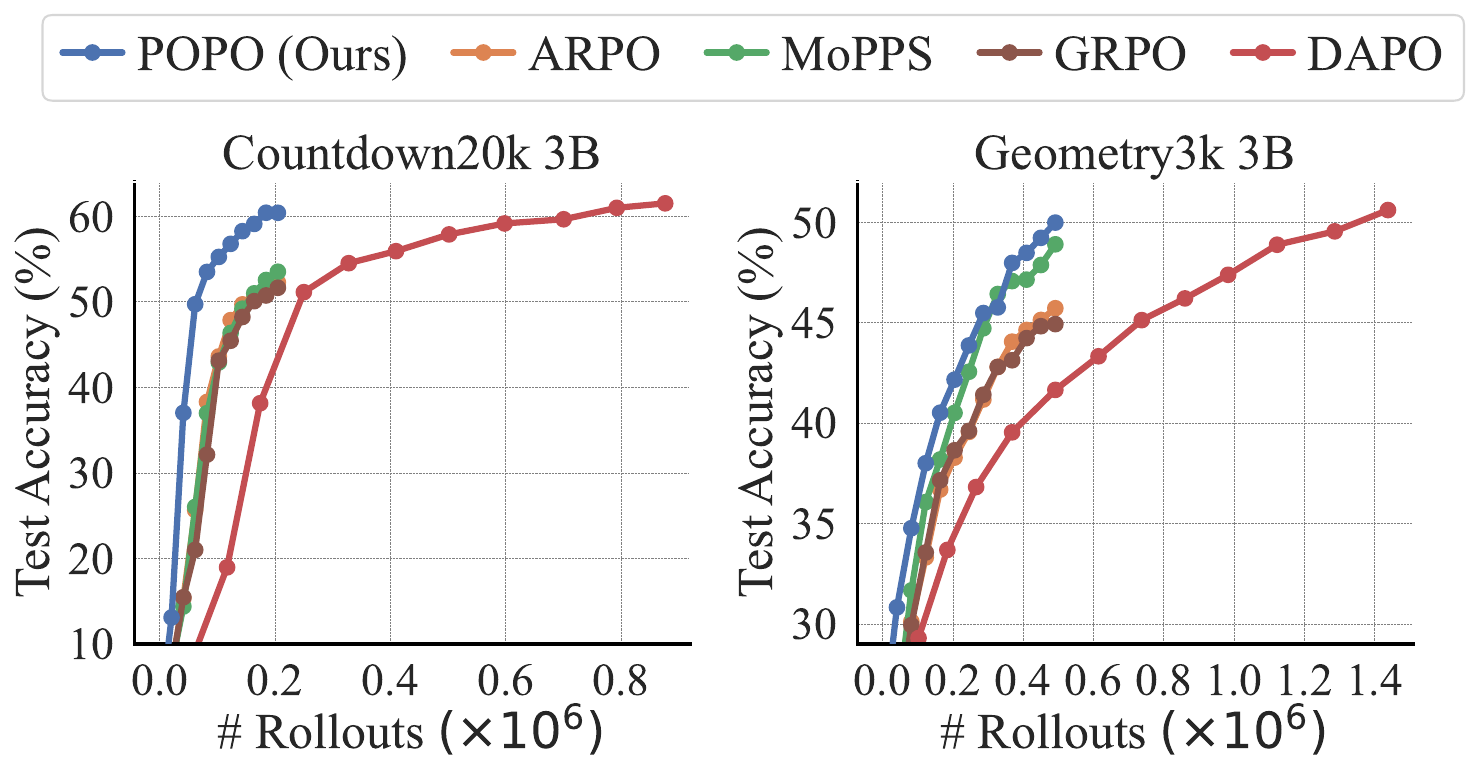}
    \vspace{-5.3mm}
    \caption{
    Training curves of different methods as a function of the number of LLM rollouts.
    }
    \label{fig:performance_rollout_2fig}
    \vspace{-10pt}
\end{wrapfigure}

\textbf{Rollout and Runtime Efficiency.}
We further compare methods in terms of rollout usage and runtime. \cref{tab:matheval,tab:cd geo eval} list the total number of generated rollouts and runtime for each task, and Fig.~\ref{fig:performance_rollout_2fig} plots model performance against rollout count during training (see Fig.~\ref{fig:performance_rollout} for full curves). \algo achieves competitive performance with DAPO using only 30\% of its rollout budget and roughly half the runtime. The slight runtime increase of \algo over GRPO stems from longer response lengths associated with higher performance, as shown in Fig.~\ref{fig:responselength}.

\subsection{Effects of Components in \algo}
\label{sec:ablation}

\textbf{Replay Mechanisms.}
We examine alternative replay strategies. POPO-KL selects replay groups by ranking KL divergence from a buffer 5$\times$ larger than the batch. As shown in Fig.~\ref{fig:abla} (left), POPO achieves similar performance but with significantly better efficiency (3.2h vs 4.8h), suggesting recency as an effective low-cost proxy for distance-based replay. GRPO-filter removes ineffective groups and updates only on on-policy effective groups. This filtering reduces the training batch size, which increases gradient magnitude due to batch averaging but also amplifies noise. As shown in Fig.~\ref{fig:abla} (left), this variant is less effective than POPO and only marginally outperforms GRPO, highlighting the importance of replay-based batch completion.

POPO's prioritized replay taking into account both sample quality and off-policiness. To disentangle their effects, we test the following variants: \algo-ineff substitutes ineffective online groups by replaying all recent groups regardless of their effectiveness, whereas \algo-stale(n) randomly replays historical effective groups from a larger buffer of size $n\times B$, regardless of their staleness. As shown in Fig.~\ref{fig:abla} (middle), \algo-ineff causes substantial performance degradation, validating the quality criterion. Furthermore, while \algo-stale(1) performs slightly worse, \algo-stale(10) collapses due to a large off-policy gap, underscoring the necessity of the off-policiness criterion.

\textbf{Off-policy Optimization Objectives.}
To examine different off-policy optimization objectives, we fix the replay mechanism in \algo and test $\mathcal{J}_{\text{off-}\pi_{\text{old}}}$ and $\mathcal{J}_{\text{off-}\pi_\beta}$ as alternative off-policy objectives, denoting the resulting variants as \algo-$\pi_{\text{old}}$ and \algo-$\pi_\beta$. As shown in Fig.~\ref{fig:abla} (right), \algo-$\pi_\beta$ leads to a slowdown in performance improvements due to suboptimal trust-region constraints, while \algo-$\pi_{\text{old}}$ experiences performance collapse, possibly due to the bias in the optimization process. These results support the effectiveness of the decoupled off-policy optimization objective $\mathcal{J}_{\text{off-dec}}$.

\subsection{Additional Analysis}
\begin{figure}[t!]
    \centering
    \includegraphics[width=0.93\linewidth]{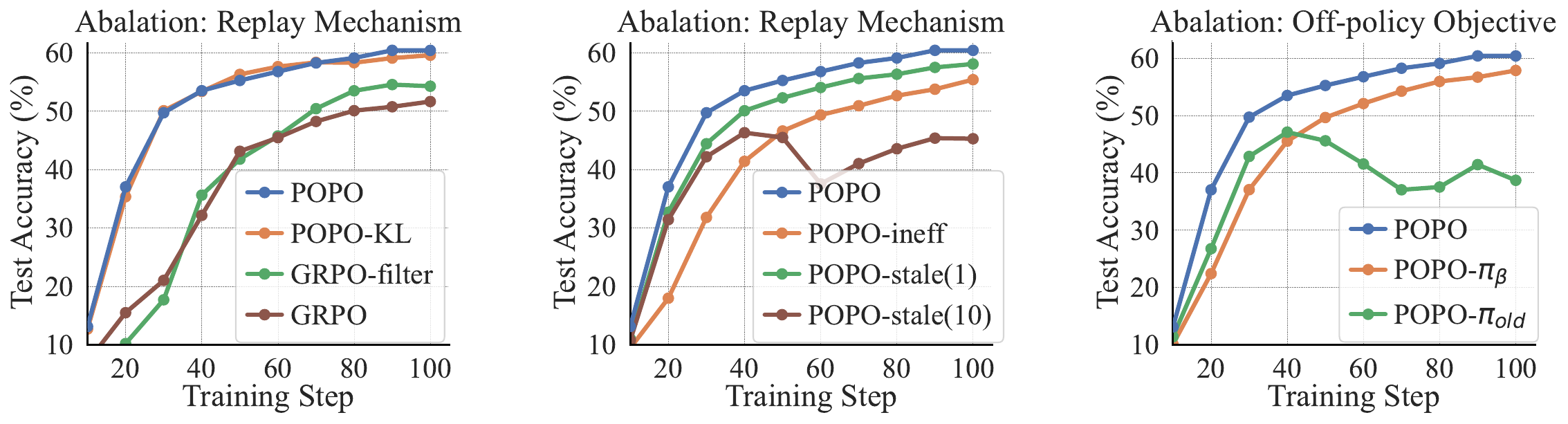}
    \vspace{-2mm}
    \caption{
    Ablation study on Countdown, including variants that use KL-based replay (\algo-KL), optimize only over on-policy effective groups (GRPO-filter), replay all recent groups regardless of effectiveness (\algo-ineff), replay stale effective groups (\algo-stale(n)), and employ alternative off-policy optimization objectives, $\mathcal{J}_{\text{off-}\pi_{\text{old}}}$ (\algo-$\pi_{\text{old}}$) and $\mathcal{J}_{\text{off-}\pi_\beta}$ (\algo-$\pi_\beta$).
    }
    \label{fig:abla}
    \vspace{-5pt}
\end{figure}
\begin{figure}[t!]
\setlength{\subfigcapskip}{-6pt}
    \vspace{-1mm}
    \centering
    \hspace{-3mm}
    \subfigure[]{
    \label{fig:replenish ratio}
    \includegraphics[width=0.253\linewidth]{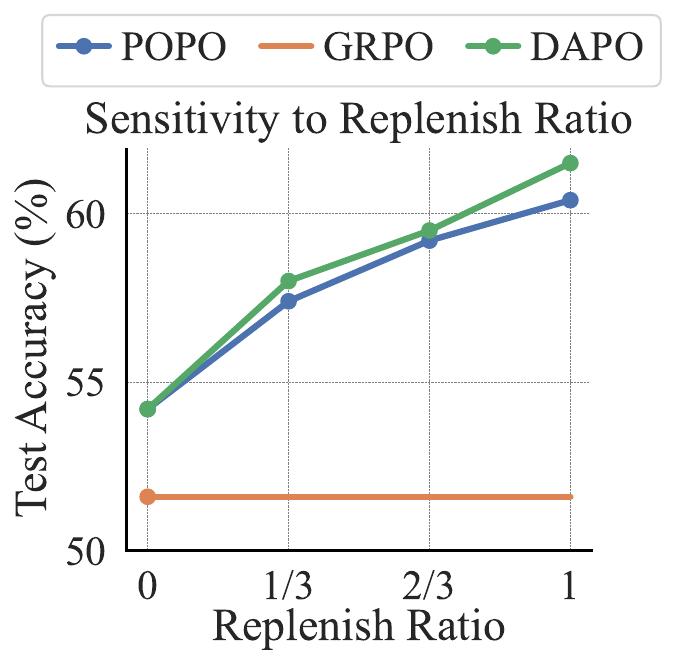}
    }
    \hspace{-5mm}
    \subfigure[]{
    \label{fig:group size}
    \includegraphics[width=0.253\linewidth]{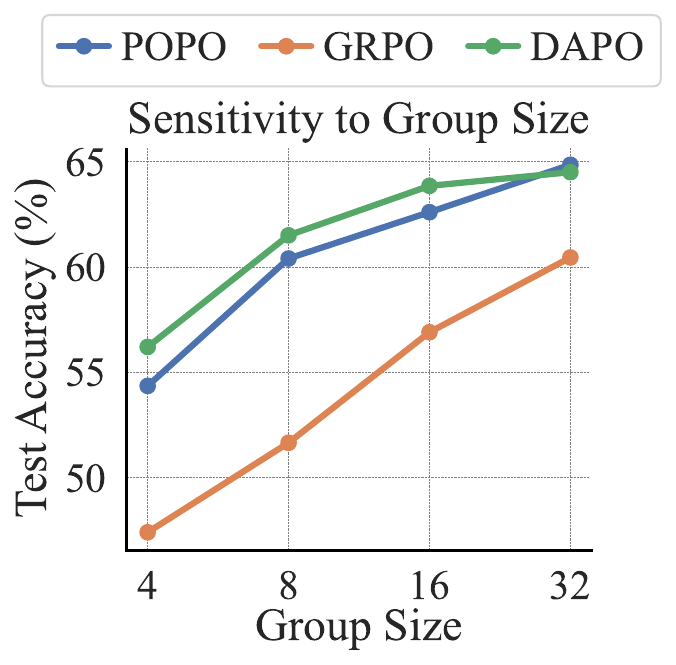}
    }
    \hspace{-5mm}
    \subfigure[]{
    \label{fig:rlvr}
    \includegraphics[width=0.251\linewidth]{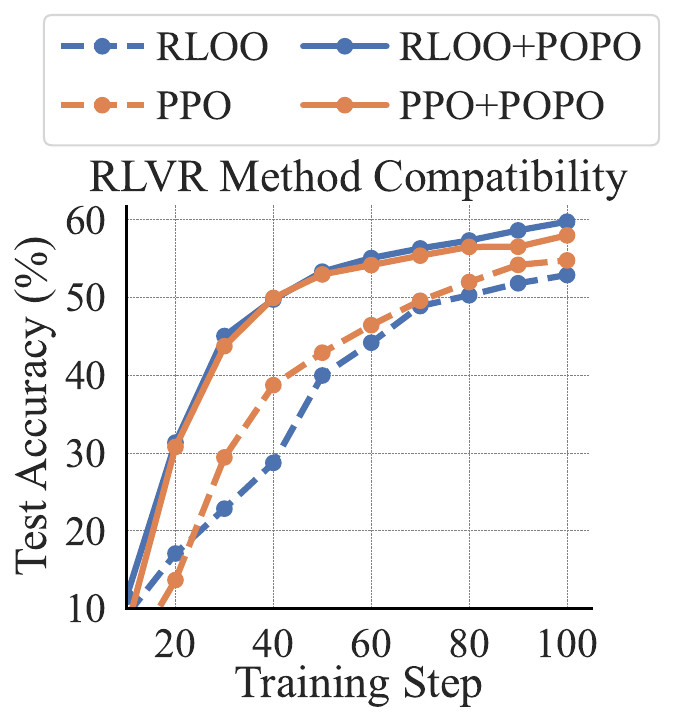}
    }
    \hspace{-5mm}
    \subfigure[]{
    \label{fig:sampling}
    \includegraphics[width=0.251\linewidth]{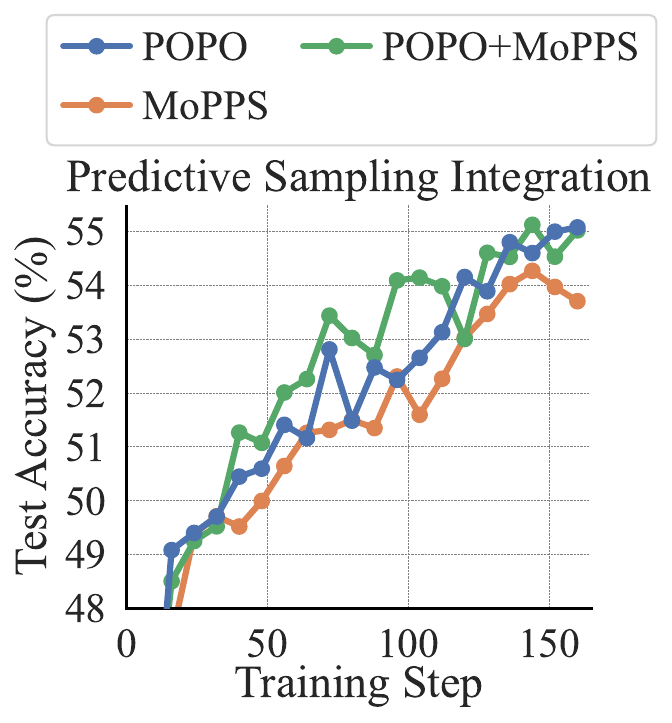}
    }
    \hspace{-3mm}
    \vspace{-3mm}
    \caption{(a) Final performance of \algo and DAPO on Countdown under varying batch replenishment ratios. (b) Final performance of \algo, GRPO, and DAPO on Countdown under varying response group sizes. (c) Training curves on Countdown when applying \algo to RLOO and PPO. (d) Training curves on DeepScaleR 1.5B when integrating \algo with predictive sampling (MoPPS).
    }
    \label{fig:additional analysis}
    \vspace{-2mm}
\end{figure}

\textbf{Sensitivity to Batch Replenishment Ratio.}
POPO and DAPO address ineffective samples via different strategies: After filtering ineffective groups, POPO replenishes the batch with off-policy groups, whereas DAPO generates additional on-policy rollouts. To systematically compare these strategies, we vary the replenishment ratio and report results on Countdown. As shown in Fig.~\ref{fig:replenish ratio}, performance increases monotonically as the proportion of effective data grows. Notably, POPO consistently matches DAPO in accuracy while maintaining a constant rollout cost.

\textbf{Sensitivity to Group Size.}
We evaluate \algo, GRPO, and DAPO under varying response group sizes, specifically $k \in \{4, 8, 16, 32\}$. Fig.~\ref{fig:group size} presents the final average test accuracies for different methods on Countdown. The results show that \algo consistently outperforms GRPO and performs comparably to DAPO across these group sizes, demonstrating the generality of \algo.

\textbf{Compatibility with Base RLVR Methods.}
While our primary experiments focus on GRPO, \algo can be readily integrated into other base RLVR algorithms. We combine \algo with RLOO~\citep{ahmadian2024back} and PPO~\citep{schulman2017proximal}, and test on the Countdown task. As shown in Fig.~\ref{fig:rlvr}, incorporating \algo consistently improves both training efficiency and final accuracy across these methods. These findings highlight its broad applicability for improving sample efficiency in RLVR pipelines.

\textbf{Integration with Sampling Methods.}
\algo is orthogonal to active prompt sampling methods and can be naturally integrated with them (e.g., MoPPS), potentially improving online data utilization. We implement POPO+MoPPS and compare it with standalone methods in Fig.~\ref{fig:sampling}, with full results, including batch composition ratios, shown in Fig.~\ref{appfig:sampling}. Compared to \algo alone, the combination yields a higher ratio of online effective groups and leads to even faster policy improvement, indicating complementary benefits between the two strategies.

\section{Conclusion and Limitations}
\label{sec:conclusion}
This work presents POPO, a simple yet effective framework for improving sample efficiency in RL finetuning of LLMs. By replacing ineffective on-policy groups with high-quality off-policy groups through recency-based replay, POPO constructs training batches composed entirely of effective samples without incurring additional rollout cost. POPO further employs decoupled importance sampling to correct off-policy bias while maintaining a consistent trust-region constraint for stable optimization. Empirical results across diverse reasoning tasks show that POPO achieves performance comparable to the SOTA method while requiring substantially fewer rollouts. 

While POPO achieves promising results, several limitations remain. For example, the replay mechanism does not necessarily select the most informative samples. Exploring hybrid strategies that combine short-term buffers with a long-term reservoir is promising. Moreover, future work may further reduce rollout costs by generating fewer online samples and leveraging more off-policy data.

\bibliographystyle{plainnat}
\bibliography{neurips_2026.bib}

\clearpage

\appendix

\newpage

\section{Related Work}
\label{sec:related work}
\paragraph{RL Finetuning for LLMs.}
RL has emerged as a central paradigm for adapting LLMs to complex tasks and target behaviors. Among its variants, Reinforcement Learning with Human Feedback (RLHF) has demonstrated strong effectiveness in aligning LLM outputs with human preferences and safety requirements~\citep{ouyang2022training,dong2024rlhf,rafailov2023direct,dai2023saferlhf,sun2023aligningrlhf,sheng2024hybridflow}. In settings where reward signals are verifiable, such as mathematical problem solving or program synthesis, Reinforcement Learning with Verifiable Rewards (RLVR) has led to substantial improvements in LLM reasoning performance~\citep{jaech2024openai,shao2024deepseekmath,team2025kimi,chu2025sftrl,guo2025deepseek}.
From an algorithmic standpoint, Proximal Policy Optimization (PPO)~\citep{schulman2017proximal}, a widely used policy-gradient method, can be directly applied to LLM finetuning. Building on this foundation, Group Relative Policy Optimization (GRPO)~\citep{shao2024deepseekmath} removes the need for a separate value network by employing a group-normalized advantage estimator, significantly reducing computational overhead. As a result, GRPO has quickly become a prevalent choice for RLVR. Subsequent research has further refined RLVR by addressing gradient bias, improving training stability, and enhancing computational efficiency~\citep{yuan2025vcppo,yue2025vapo,liu2025understanding,yu2025dapo,kazemnejad2024vineppo,hu2025reinforce++,zheng2025group}.
On the application front, recent efforts have extended RL finetuning to a wider range of tasks and increasingly large-scale models~\citep{luo2025deepscaler,dang2025reinforcement,luo2025deepcoder,zeng2025simplerl,meng2025mm,xu2024llava}. In parallel, infrastructure-level advances have produced scalable and resource-efficient frameworks that support distributed RL training tailored to the demands of LLMs~\citep{sheng2024hybridflow,hu2025open}.

\paragraph{Active Sampling for RL Finetuning.}

A growing body of work emphasizes that the effectiveness of RL finetuning is severely constrained by ineffective training samples~\citep{yu2025dapo}, which are either too easy or too difficult, yielding degenerate rewards and limited optimization signals~\citep{bae2025online, chen2025self}.
To address this, recent studies explore active prompt sampling strategies that dynamically select informative prompts on a per-step or per-epoch basis~\citep{bae2025online, zhang2025srpo, cui2025process, liu2025prorl}. 
A representative SOTA method, DAPO~\citep{yu2025dapo}, oversamples and filters prompts using rollout feedback, improving sample quality but incurring substantial overhead from enlarged batch generation.
To avoid this cost, predictive sampling methods aim to identify prompts with intermediate success probability prior to rollout~\citep{qu2025can,shen2025bots,xu2025single,zheng2025act}. 
These methods typically estimate the current success rate of each prompt based on historical reward statistics, employing Bayesian inference or heuristic modeling.
For example, GRESO~\citep{zheng2025act} proposes selective rollouts to reduce unnecessary generations, while PCL~\citep{gao2025prompt} estimates prompt difficulty for prioritized sampling.
Nevertheless, accurate prediction is difficult under continually evolving policies and sparse observations, often limiting the number of effective samples.
In contrast, \algo avoids additional rollout cost by reusing previously collected effective groups, enabling full batches of effective data for optimization and offering an orthogonal perspective focused on post-hoc utilization rather than pre-rollout filtering.

\paragraph{Experience Replay for RL Finetuning.}
Experience replay is an important technique in classical RL to improve sample efficiency~\citep{lin1992self,mnih2013playing,mnih2015human,schaul2015prioritized} and has recently gained traction in RL finetuning due to the high online generation cost of LRMs~\citep{zhang2025survey}.
Recent studies show that replaying certain trajectories can accelerate training. 
For example, ARPO~\citep{lu2025arpo} and AR3PO~\citep{zhang2025improving} store historical successful trajectories and substitute one response when all online rollouts in a group fail. RLEP~\citep{zhang2025rlep} reuses successful trajectories from a well-trained policy and re-trains a policy by appending successful trajectories to online groups. ExGRPO~\citep{zhan2025exgrpo} identifies rollout correctness and entropy as indicators of experience value, and mixes past correct responses with the lowest entropy into online groups. RRL~\citep{dou2025improving} selectively replays promising earlier reasoning traces to continue exploring them. DEPO~\citep{tang2025towards} introduces a two-stage filtering and replay strategy with offline data selection and online explorability-based filtering to improve data efficiency. RR~\citep{sun2025improving} generates fewer online rollouts and reuses past ones to fill a fixed fraction of batches, thereby reducing rollout cost. ReMix~\citep{liang2025squeeze} revisits the generalized proximal gradient theory~\citep{queeney2021generalized} and introduces mix-policy proximal policy gradient in RLVR.
However, these methods do not fully resolve the issue of ineffective samples. 
Moreover, their off-policy optimization objectives may be subject to issues such as intractable bias, suboptimal constraints, or invalid advantage estimates.
In contrast, \algo addresses these limitations and leverages high-quality off-policy optimization signals to accelerate RL finetuning.

\section{Proofs}
\label{appsec:proofs}

\begin{lemma}
Under proximal policy updates, assume that $1-\epsilon \leq \frac{\pi_{i+1}(a|s)}{\pi_{i}(a|s)} \leq 1+\epsilon$ for all steps $i$ and state-action pairs $(s,a)$. Then, for any step difference $n\ge 1$, $\mathrm{TV}\big(\pi_{i}(\cdot|s),\pi_{i+n}(\cdot|s)\big) \le \frac{\epsilon n}{2}$.
\end{lemma}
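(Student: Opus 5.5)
The plan is to bound the total variation between consecutive policies and then chain these one-step bounds with the triangle inequality. Throughout, the state $s$ is fixed, and I use $\mathrm{TV}(p,q)=\frac12\sum_a |p(a)-q(a)|$ (an integral in the continuous case).

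\textbf{Step 1 (one-step bound).} Write $\pi_{i+1}(a|s)-\pi_i(a|s)=\pi_i(a|s)\bigl(\frac{\pi_{i+1}(a|s)}{\pi_i(a|s)}-1\bigr)$. The assumption gives $\bigl|\frac{\pi_{i+1}(a|s)}{\pi_i(a|s)}-1\bigr|\le\epsilon$. It follows that
\begin{equation*}
\mathrm{TV}\big(\pi_i(\cdot|s),\pi_{i+1}(\cdot|s)\big)=\frac12\sum_a \pi_i(a|s)\Bigl|\frac{\pi_{i+1}(a|s)}{\pi_i(a|s)}-1\Bigr|\le \frac12\sum_a \pi_i(a|s)\,\epsilon=\frac{\epsilon}{2}.
\end{equation*}
One detail needs attention here: actions with $\pi_i(a|s)=0$. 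Implicitly the ratio assumption requires $\pi_i(a|s)>0$ wherever the ratio is defined. The clean way to state this is that $\pi_{i+1}\ll\pi_i$, so such actions also have $\pi_{i+1}(a|s)=0$ and contribute nothing to the sum. I would add this as a brief remark.

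\textbf{Step 2 (chaining).} TV is a metric on distributions over actions, since it is half the $\ell_1$ norm. The triangle inequality therefore gives
\begin{equation*}
\mathrm{TV}\big(\pi_i(\cdot|s),\pi_{i+n}(\cdot|s)\big)\le\sum_{j=0}^{n-1}\mathrm{TV}\big(\pi_{i+j}(\cdot|s),\pi_{i+j+1}(\cdot|s)\big).
\end{equation*}
Applying Step 1 to each of the $n$ terms bounds the sum by $n\epsilon/2$, which is the claimed inequality.

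There is no real obstacle. The only subtle point is the one-step computation, where the bound should be expressed as an expectation under $\pi_i$ of $|\text{ratio}-1|$, so that the probabilities sum to one. The zero-probability and support caveat above is the one place where rigor requires a sentence.
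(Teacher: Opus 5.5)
Your proposal is correct and follows the paper's proof essentially step for step. The paper likewise bounds $|\pi_{i+1}(a|s)-\pi_i(a|s)|\le\epsilon\,\pi_i(a|s)$, sums using the normalization of $\pi_i$ to get a one-step TV bound of $\epsilon/2$, and then chains the $n$ one-step bounds with the triangle inequality. Your remark on zero-probability actions (reading the ratio condition as $(1-\epsilon)\pi_i\le\pi_{i+1}\le(1+\epsilon)\pi_i$, which forces $\pi_{i+1}\ll\pi_i$) is a small rigor addition that the paper leaves implicit.
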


\begin{proof}
We prove the lemma for a fixed state $s$. For brevity, write $\pi_i(a)=\pi_i(a|s)$. Recall that the total variation distance between two distributions $p,q$ on the same finite space is
\begin{equation}
\mathrm{TV}(p,q)=\frac12\sum_a |p(a)-q(a)|.
\end{equation}
From the assumption,
\begin{equation}
1-\epsilon \le \frac{\pi_{i+1}(a)}{\pi_i(a)} \le 1+\epsilon,
\quad \forall a,
\end{equation}
which implies
\begin{equation}
\left|\frac{\pi_{i+1}(a)}{\pi_i(a)}-1\right| \le \epsilon,
\quad \forall a.
\end{equation}
Multiplying both sides by $\pi_i(a)\ge 0$ yields
\begin{equation}
|\pi_{i+1}(a)-\pi_i(a)|
= \pi_i(a)\left|\frac{\pi_{i+1}(a)}{\pi_i(a)}-1\right|
\le \epsilon \, \pi_i(a),
\quad \forall a.
\end{equation}
Summing over all actions $a$ and using the normalization $\sum_a \pi_i(a)=1$, we obtain
\begin{equation}
\sum_a |\pi_{i+1}(a)-\pi_i(a)|
\le \epsilon \sum_a \pi_i(a)
= \epsilon.
\end{equation}
Therefore,
\begin{equation}
\mathrm{TV}(\pi_i,\pi_{i+1})
=\frac12\sum_a |\pi_{i+1}(a)-\pi_i(a)|
\le \frac{\epsilon}{2}.
\label{appeq:one-step bound}
\end{equation}
Next, for any $n\ge 1$, we apply the triangle inequality for total variation distance repeatedly along the sequence of intermediate policies:
\begin{equation}
\mathrm{TV}(\pi_i,\pi_{i+n})
\le \sum_{k=0}^{n-1} \mathrm{TV}(\pi_{i+k},\pi_{i+k+1}).
\label{appeq:triangle}
\end{equation}
Applying the one-step bound in Eq.~\eqref{appeq:one-step bound} to each term in the sum,
\begin{equation}
\mathrm{TV}(\pi_{i+k},\pi_{i+k+1}) \le \frac{\epsilon}{2},
\quad \forall k\ge 0.
\end{equation}
Hence,
\begin{equation}
\mathrm{TV}(\pi_i,\pi_{i+n})
\le \sum_{k=0}^{n-1} \frac{\epsilon}{2}
= \frac{\epsilon n}{2}.
\end{equation}
Since the argument holds for an arbitrary fixed state $s$, we conclude that
\begin{equation}
\mathrm{TV}\big(\pi_i(\cdot|s),\pi_{i+n}(\cdot|s)\big)
\le \frac{\epsilon n}{2}
\end{equation}
for all step differences $n$, which completes the proof.
\end{proof}

\begin{lemma}
\label{applem:decoupled}
$\mathcal{J}_{\text{off-dec}}$ and $\mathcal{J}_{\text{off-}\pi_\beta}$ differ only in their trust-region constraints:
the former clips $\rho_{i,t}(\theta,\theta_{\text{old}})$, whereas the latter clips $\rho_{i,t}(\theta,\beta)$.
For any token at which neither ratio is clipped, the two objectives yield identical values and gradients.
\end{lemma}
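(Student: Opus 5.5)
The proof works token by token, since both objectives share the expectation over $q, o_i \sim \pi_\beta$, the averaging weights $\frac{1}{k}\frac{1}{|o_i|}$ and the advantages $\hat{A}_i$. The whole argument rests on one algebraic identity: the importance ratios factor as
\begin{equation}
\rho_{i,t}(\theta,\beta) = \frac{\pi_\theta(o_{i,t}\mid q,o_{i,<t})}{\pi_{\theta_{\text{old}}}(o_{i,t}\mid q,o_{i,<t})}\cdot\frac{\pi_{\theta_{\text{old}}}(o_{i,t}\mid q,o_{i,<t})}{\pi_{\beta}(o_{i,t}\mid q,o_{i,<t})} = \rho_{i,t}(\theta,\theta_{\text{old}})\,\rho_{i,t}(\theta_{\text{old}},\beta).
\end{equation}
We assume all probabilities are positive, so the ratios are well defined.

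\textbf{First claim.} Strip the clipping from both per-token integrands. For $\mathcal{J}_{\text{off-}\pi_\beta}$ the unclipped term is $\rho_{i,t}(\theta,\beta)\hat{A}_i$. For $\mathcal{J}_{\text{off-dec}}$ it is $\rho_{i,t}(\theta_{\text{old}},\beta)\,\rho_{i,t}(\theta,\theta_{\text{old}})\hat{A}_i$. By the identity these coincide, so the two surrogates share the same importance-weighted estimator of the policy-gradient objective. They differ only in which ratio enters $\min(\cdot,\clip(\cdot,1\pm\epsilon)\cdot)$: $\rho_{i,t}(\theta,\theta_{\text{old}})$ in the decoupled objective, $\rho_{i,t}(\theta,\beta)$ in the other. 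That is exactly the claim about trust-region constraints.

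\textbf{Second claim.} Say the ratio is \emph{unclipped} at a token when $\clip(\rho,1\pm\epsilon)=\rho$, or more generally when the $\min$ selects the unclipped branch. In that case $\min(\rho\hat{A}_i,\clip(\rho,1\pm\epsilon)\hat{A}_i)=\rho\hat{A}_i$.

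Suppose neither ratio is clipped at a token. Then the $\mathcal{J}_{\text{off-}\pi_\beta}$ integrand reduces to $\rho_{i,t}(\theta,\beta)\hat{A}_i$. The $\mathcal{J}_{\text{off-dec}}$ integrand reduces to $\rho_{i,t}(\theta_{\text{old}},\beta)\rho_{i,t}(\theta,\theta_{\text{old}})\hat{A}_i$, which equals the same quantity by the identity. So the values agree.

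For gradients, note that $\rho_{i,t}(\theta_{\text{old}},\beta)$ and $\hat{A}_i$ are constants in $\theta$: $\pi_{\theta_{\text{old}}}$ and $\pi_\beta$ are fixed (stop-gradient) policies. Hence both gradients equal $\hat{A}_i\,\pi_\theta(o_{i,t}\mid\cdot)\,\nabla_\theta\log\pi_\theta(o_{i,t}\mid\cdot)/\pi_\beta(o_{i,t}\mid\cdot)$.

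\textbf{Main obstacle.} The only delicate point is that gradient equality is local, not just pointwise. The unclipped branch must be the active one in a neighborhood of $\theta$, so that differentiating the $\min/\clip$ expression gives the derivative of the linear branch. This holds when the ratios lie strictly inside $(1-\epsilon,1+\epsilon)$, or when they are on the non-binding side of the $\min$ determined by the sign of $\hat{A}_i$.

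At the boundary $\rho=1\pm\epsilon$ the expressions are only subdifferentiable. I would handle this by adopting the standard convention that the gradient of $\clip$ is taken as in autodiff implementations, and by stating the result for ratios strictly inside the unclipped region. Summing over tokens and taking the expectation then completes the proof.
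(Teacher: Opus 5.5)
Your proof is correct and follows the paper's argument. Both factor $\rho_{i,t}(\theta,\beta)=\rho_{i,t}(\theta,\theta_{\text{old}})\,\rho_{i,t}(\theta_{\text{old}},\beta)$, note that both per-token terms reduce to $\rho_{i,t}(\theta,\beta)\hat{A}_i$ when neither ratio is clipped, and get equal gradients because $\rho_{i,t}(\theta_{\text{old}},\beta)$ and $\hat{A}_i$ are constant in $\theta$. Your extra remarks refine that argument rather than change it: you extend ``unclipped'' to the non-binding side of the $\min$, and you require the linear branch to be active in a neighborhood, which the paper's closed-interval condition glosses over at $r=1\pm\epsilon$.
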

\begin{proof}
Fix a token $(i,t)$ and define 
\begin{equation}
r = \rho_{i,t}(\theta,\theta_{\text{old}}), \qquad
w = \rho_{i,t}(\theta_{\text{old}},\beta), \qquad
\hat{A} = \hat{A}_i,
\end{equation}

Note that $\rho_{i,t}(\theta,\beta) = r w$. The per-token contributions of the two objectives $\mathcal{J}_{\text{off-}\pi_\beta}$ and $\mathcal{J}_{\text{off-dec}}$ are
\begin{equation}
L_\beta = \min\bigl(r w \hat{A},\; \operatorname{clip}(r w,\,1\pm\epsilon)\,\hat{A}\bigr), \qquad
L_{\text{dec}} = w \cdot \min\bigl(r \hat{A},\; \operatorname{clip}(r,\,1\pm\epsilon)\,\hat{A}\bigr).
\end{equation}

Thus $\mathcal{J}_{\text{off-}\pi_\beta}$ clips $rw$ while $\mathcal{J}_{\text{off-dec}}$ clips $r$, differing in the argument of the trust-region clipping.

Consider any token where neither ratio is clipped by its respective clipping operator.
This means
\begin{equation}
r \in [1-\epsilon,\,1+\epsilon] \quad\text{and}\quad r w \in [1-\epsilon,\,1+\epsilon].
\end{equation}
Under these conditions,
\begin{equation}
L_\beta = \min\big(r w \hat{A},\; r w \hat{A}\big) = w \cdot \min\big(r \hat{A},\; r \hat{A}\big) = L_{\text{dec}}.
\end{equation}
Thus $L_\beta = L_{\text{dec}}$ pointwise. 
Because the outer expectation over $q$ and $o_i\sim\pi_\beta$ is identical for both objectives,
their total values coincide on the set of unclipped tokens.
For gradients, the advantage $\hat{A}$ is treated as constant and $w = \rho_{i,t}(\theta_{\text{old}},\beta)$ is independent of $\theta$, so
\begin{equation}
\nabla_\theta L_\beta = \hat{A}\, \nabla_\theta(r w) = \hat{A}\, w\, \nabla_\theta r = \nabla_\theta L_{\text{dec}}.
\end{equation}
Hence on any unclipped token the two objectives coincide in value and gradient.
\end{proof}

\section{Discussions}
\label{appsec:discussions}
\paragraph{Behavior Policy under Selective Replay.}
POPO performs prioritized replay at the group level, which preserves a well-defined behavior policy for each prompt. Specifically, all trajectories within a group are generated by the same rollout policy (i.e., the LLM policy at that training step), ensuring that the replayed data follows a tractable behavior distribution. This property enables principled importance sampling for off-policy correction.
In contrast, prior trajectory-level replay methods perform selective replay at the level of individual trajectories within a prompt. Such filtering effectively samples from a truncated and implicit subset of the original behavior policy, resulting in an intractable distribution. Consequently, the corresponding importance correction becomes ill-defined, leading to unresolved bias. This distinction highlights a key advantage of group-level replay in maintaining both statistical correctness and optimization stability.

\paragraph{Sensitivity to Off-policy Gap and Robustness.}
Our ablations in Sec.~\ref{sec:ablation} show that excessive off-policy drift, induced by intentionally replaying stale groups (POPO-stale($n$)), leads to performance degradation. POPO is explicitly designed to mitigate this issue via (i) recency-based replay, which limits distribution shift, and (ii) decoupled importance sampling, which corrects residual bias while maintaining stable trust-region updates. As shown in Sec.~\ref{sec:ablation}, alternative replay strategies and objectives result in inferior performance, supporting the importance of these design choices. At the same time, POPO is not overly sensitive to moderate variations in recency: for example, POPO-stale(1) incurs only minor degradation and still significantly outperforms GRPO.

\paragraph{Robustness to Replay-induced Overfitting.}
POPO is designed to mitigate potential overfitting under replay and maintain strong generalization. Specifically, several mechanisms contribute to this robustness.
First, replay is bounded: the FIFO buffer has a small, fixed capacity and is continuously refreshed with newly collected effective groups, preventing the training data from collapsing to a static subset. Second, replay is used only to fill ineffective slots rather than replacing the entire batch, which preserves continuous exposure to fresh, randomly sampled prompts and responses. Third, as the policy model evolves, the set of effective samples also changes dynamically; it is not tied to a fixed subset of prompts. Combined with random online prompt sampling, this ensures continual exploration and discovery of new effective samples rather than repeatedly focusing on the same ones. Empirically, we do not observe signs of overfitting (e.g., performance degradation or instability); instead, POPO consistently improves both sample efficiency and generalization across tasks.

\paragraph{Sample Reuse and Off-policy Control.}
POPO adopts a small FIFO buffer with recency-based replay, which may discard earlier useful groups. This design is intentional to bound the off-policy gap: under proximal updates, recent policies are theoretically closer to the current policy (Lemma~\ref{lem:off-policiness}), making replayed samples more reliable for off-policy correction. Empirical results further support this choice. Our ablations show that replaying increasingly stale groups degrades performance, with POPO-stale(10) collapsing, while small recency windows remain stable.
This design reflects a deliberate trade-off between sample reuse and off-policy control, prioritizing stability and reliable optimization over the reuse of older data. Nevertheless, retaining certain informative historical groups may still be beneficial. Exploring hybrid strategies, such as combining short-term buffers with a small long-term reservoir to balance recency and diversity without significantly increasing off-policy gaps, is a promising direction for future work.

\paragraph{Variance Control in Importance Sampling.}
POPO effectively controls variance in importance sampling, even under policy drift, through two mechanisms. First, recency-based replay keeps behavior policies close to the current policy, directly reducing the magnitude of importance weights. Second, we apply clipping (with an upper bound of 2.0) to $\rho_{i,t}(\theta_{\text{old}}, \beta)$ to suppress outliers in practice. Empirically, we do not observe instability arising from variance, suggesting that these mechanisms are effective.

\paragraph{Generalization to Diverse Reward Structures.}
While Definition~\ref{def:effective} is motivated by common binary-reward RLVR settings, the underlying principle extends naturally to more general reward structures, including continuous or multi-level rewards. In such cases, the strict zero-variance condition can be relaxed to a small-variance threshold or other low-information criteria that similarly yield weak learning signals. Thus, the definition is task-specific, while the POPO framework itself does not fundamentally rely on strict binary rewards or exact zero variance; instead, it operates on the presence of informative relative differences across groups.

\section{Experimental Details}
\label{appsec:implementation}

\subsection{Details of Tasks and Models}
We evaluate \algo across three distinct and challenging reasoning domains: competition-level mathematics, numerical planning, and visual geometric reasoning. To assess its general applicability, conduct experiments using a diverse set of large language and multimodal models spanning different capacities and architectures. We fine-tune all models using the widely adopted GRPO algorithm implemented in the verl framework~\citep{sheng2024hybridflow}. Model performance is measured by test accuracy, reported as the average pass@1 over multiple independent completions per example, with the number of generations varying across benchmarks. The training datasets, evaluation benchmarks, and base models for each domain are described below, with illustrative data examples provided in \cref{appsec:dataexample}.

\subsubsection{Competition-level Mathematics}
\paragraph{Training Dataset.}

For competition-level mathematics, we train large reasoning models on the DeepScaleR dataset~\citep{luo2025deepscaler}, which consists of 40,315 problems designed to reflect competition-level difficulty.
Specifically, we use the Hugging Face release from \url{https://huggingface.co/datasets/agentica-org/DeepScaleR-Preview-Dataset}.

\paragraph{Test Benchmarks.}
We evaluate mathematical reasoning ability on a diverse collection of benchmarks, including AIME24, AMC23, MATH500~\citep{lightman2023let}, Minerva Math~\citep{lewkowycz2022solving}, and OlympiadBench~\citep{he2024olympiadbench}, using the datasets hosted at \url{https://huggingface.co/datasets/math-ai}.
Training curves summarize results as the average accuracy across these mathematics benchmarks.
To examine generalization beyond the training distribution, we further conduct evaluations on three broader reasoning benchmarks: MMLU-Pro~\citep{wang2024mmlu}, ARC-c~\citep{clark2018think}, and GPQA-diamond (GPQA)~\citep{rein2024gpqa}, using the datasets provided by \citet{yan2025learning}. These benchmarks contain general reasoning problems in domains such as science and business.
Following prior work~\citep{gao2025prompt}, we adopt mixed Avg@k settings for evaluation, using larger $k$ for smaller datasets to reduce variance: Avg@32 for AIME25 and AMC23, Avg@4 for Minerva Math, Avg@1 for MATH500 and OlympiadBench, and Avg@8 for OOD benchmarks (MMLU-Pro, ARC-c, and GPQA-diamond).

\paragraph{Base Models.}
Following prior work~\citep{luo2025deepscaler}, we adopt two base models from DeepSeek~\citep{guo2025deepseek}: DeepSeek-R1-Distill-Qwen-1.5B, available at \url{https://huggingface.co/deepseek-ai/DeepSeek-R1-Distill-Qwen-1.5B} and DeepSeek-R1-Distill-Qwen-7B, available at \url{https://huggingface.co/deepseek-ai/DeepSeek-R1-Distill-Qwen-7B}.

\subsubsection{Numerical Planning}
\paragraph{Training Dataset.}
For arithmetic planning, we use the Countdown Number Game, in which agents must form a target value using basic operations over a given set of numbers~\citep{tinyzero}. Training is conducted on a 20,000-instance subset of the full Countdown-34 dataset, available from the Hugging Face repository at \url{https://huggingface.co/datasets/Jiayi-Pan/Countdown-Tasks-3to4}.

\paragraph{Test Benchmarks.}
Models are evaluated on two benchmarks: (i) CD-34, which contains 512 held-out instances from Countdown-34; and (ii) CD-4, comprising 512 problems from Countdown-4, a more challenging variant that uses four input numbers, available at \url{https://huggingface.co/datasets/Jiayi-Pan/Countdown-Tasks-4}.
Compared with CD-34, CD-4 substantially expands the search space and increases task difficulty. We report Avg@16 for both benchmarks, and training curves show the average test performance across CD-34 and CD-4.

\paragraph{Base Models.}
Following prior work~\cite{chen2025self,qu2025can}, we finetune the base model Qwen2.5-3B from Qwen~\citep{yang2024qwen2}, availiable at \url{https://huggingface.co/Qwen/Qwen2.5-3B}.

\subsubsection{Visual Geometry}
\paragraph{Training Dataset.}
For visual geometry, we use the training split of the Geometry3k dataset~\citep{lu2021inter, geometry3k_dataset}, available at \url{https://huggingface.co/datasets/hiyouga/geometry3k}. This dataset contains 2,101 diagram-based geometry problems that require both visual interpretation and symbolic reasoning.

\paragraph{Test Benchmarks.}
We assess the trained models on the corresponding benchmark test split, which comprises 601 visual reasoning problems, and report Avg@16.

\paragraph{Base Models.}
For visual geometric reasoning, we follow \citet{qu2025can} and employ the vision-language model Qwen2.5-VL-3B-Instruct from Qwen~\citep{bai2025qwen2}, available at \url{https://huggingface.co/Qwen/Qwen2.5-VL-3B-Instruct}.

\subsection{Training Details}

In the main experiments, both our approach and all baselines are built upon GRPO~\citep{shao2024deepseekmath} as the underlying RLVR algorithm. To ensure a fair comparison, we adopt the same GRPO implementation provided in the verl framework~\citep{sheng2024hybridflow}, as detailed below.
At each training step, $k=8$ responses are sampled per prompt to estimate advantages and finetune models, using a temperature of $1.0$ and $\texttt{top\_p}=1.0$. Performance is measured with pass@1, computed across multiple independent generations for each prompt, with the number of samples varying by benchmark. 
For evaluation, we follow \citet{luo2025deepscaler,qu2025can} and generate responses with a temperature of $0.6$ and $\texttt{top\_p}=0.95$.
We remove the KL-divergence penalty from the policy optimization loss, consistent with \citet{yu2025dapo}. In addition, all methods employ the clip-higher technique from DAPO~\citep{yu2025dapo}, which uses asymmetric clipping bounds with $\epsilon_{\text{low}}=0.2$ and $\epsilon_{\text{high}}=0.28$ to enhance exploration.

The training batch size $B$ is set to 256 for both DeepScaleR and Countdown, with mini-batch sizes of 128 and 64, respectively. For Geometry, $B$ is increased to 512 with a mini-batch size of 256. The maximum output length is set to 8192 tokens for DeepScaleR and 1024 tokens for Countdown and Geometry.
Entropy regularization is applied to Countdown with a coefficient of 0.001, while it is disabled for DeepScaleR and Geometry. Models is optimized using Adam~\citep{kingma2014adam} with momentum parameters $\beta=(0.9, 0.999)$ and a weight decay of $0.01$. The learning rate is set to $1\mathrm{e}{-6}$ for Countdown and Geometry. For DeepScaleR, we use $2\mathrm{e}{-6}$ for the 7B model and $4\mathrm{e}{-6}$ for the 1.5B model. DeepScaleR adopts a binary reward function that assigns a value of 1 to correct answers and 0 otherwise, following the default configuration in verl~\citep{sheng2024hybridflow}. In contrast, Countdown and Geometry3k incorporate an additional format bonus of 0.1 when the response is incorrectly solved but adheres to the required formatting, consistent with the setup in \cite{tinyzero}.
All experiments are executed on 8 NVIDIA A100 GPUs with 80GB memory.

\begin{figure}[t]
    \centering
    \includegraphics[width=\linewidth]{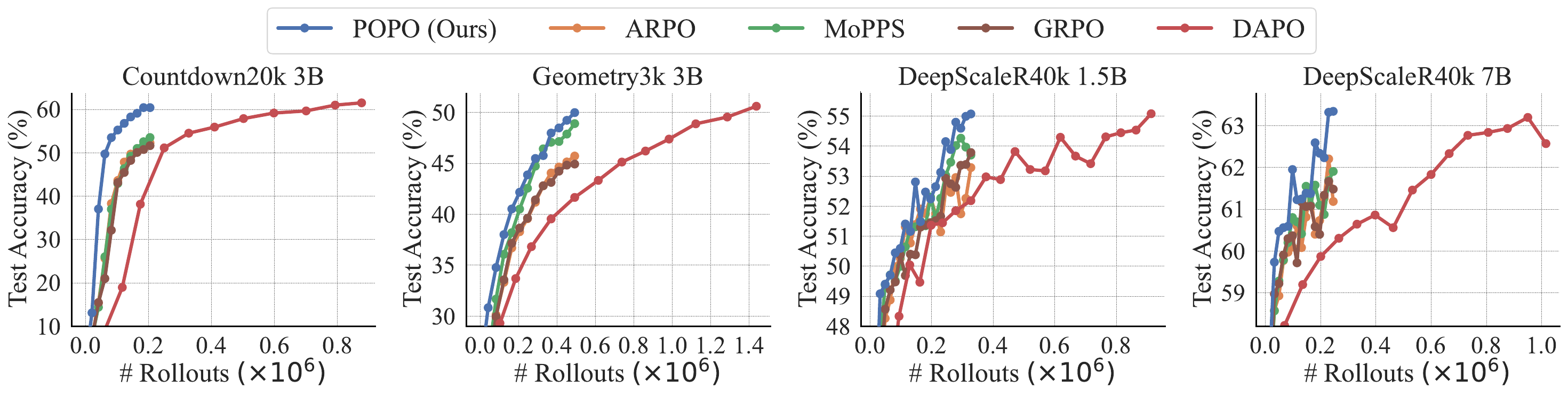}
    \vspace{-15pt}
    \caption{
    Training curves of different methods across tasks as a function of LLM rollout count.
    }
    \label{fig:performance_rollout}
\end{figure}

\section{Extended Experimental Results}
\label{appsec:results}

\subsection{Rollout Efficiency}
\cref{fig:performance} shows that \algo and DAPO substantially accelerate RL fine-tuning compared to the other baselines when measured by training steps.
However, training-step-based comparisons fail to account for the cost of LLM inference, which often dominates the overall training expense.
Because DAPO relies on oversampling, it generates a larger number of rollouts per step, leading to significantly higher inference overhead.
\cref{fig:performance_rollout} instead plots performance as a function of the cumulative number of rollouts.
The results demonstrate that \algo achieves strong performance using markedly fewer rollouts than DAPO, typically requiring 30\% of DAPO's rollout budget to reach comparable results.

\subsection{Integration with Sampling Methods}

\begin{figure}[t]
    \centering
    \includegraphics[width=0.9\linewidth]{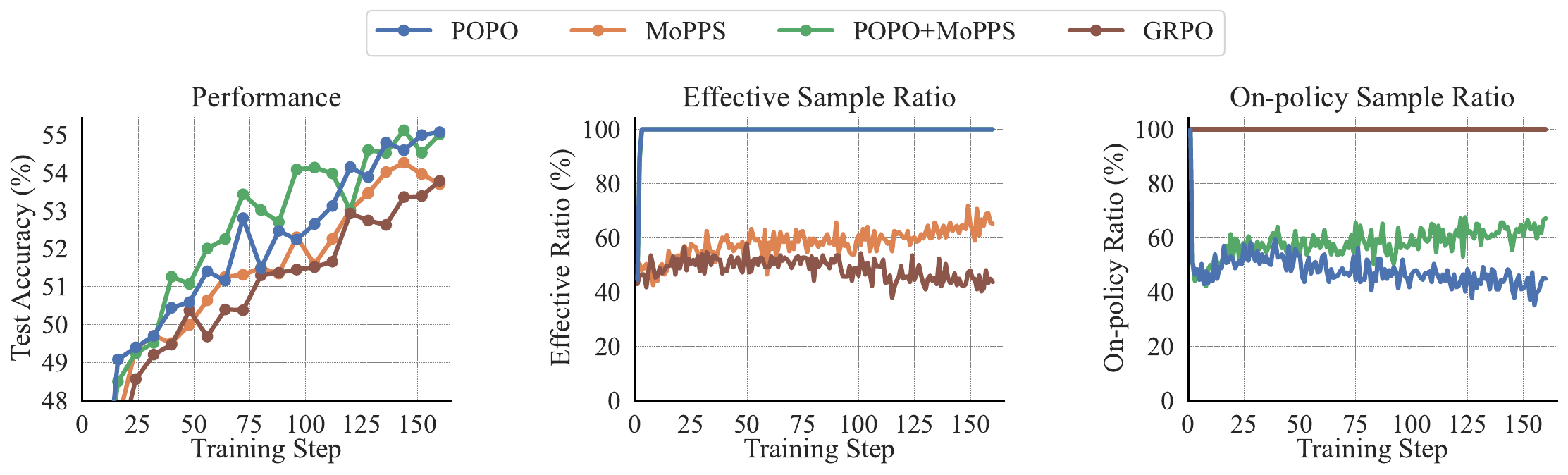}
    \caption{
    Test performance, effective sample ratio, and on-policy sample ratio on the DeepScaleR 1.5B task when integrating \algo with the predictive sampling method MoPPS.
    }
    \label{appfig:sampling}
\end{figure}

\algo can be naturally combined with sampling methods that adaptively select informative prompts. By incorporating predictive sampling (e.g., MOPPS), \algo increases the likelihood of sampling effective groups online, thereby reducing the need to fill training batches with off-policy effective groups and potentially yielding further performance gains. To investigate this effect, we integrate MOPPS with \algo, denoted as POPO+MOPPS, and compare it against standalone \algo, MOPPS, and GRPO on the DeepScaleR 1.5B task. Fig.~\ref{appfig:sampling} reports the full results, including test performance, effective sample ratio, and on-policy sample ratio throughout training. 

As shown in Fig.~\ref{appfig:sampling}, \algo+MOPPS maintains a 100\% effective sample ratio, the same as \algo, while achieving a higher on-policy sample ratio. This shift toward more on-policy data leads to additional performance gains, as evidenced by faster improvement and higher test accuracy. These results indicate that combining predictive sampling with \algo enables a more favorable balance between on-policy and off-policy data, yielding complementary benefits for policy optimization.

\subsection{Response Length}

\begin{figure}[t]
    \centering
    \includegraphics[width=\linewidth]{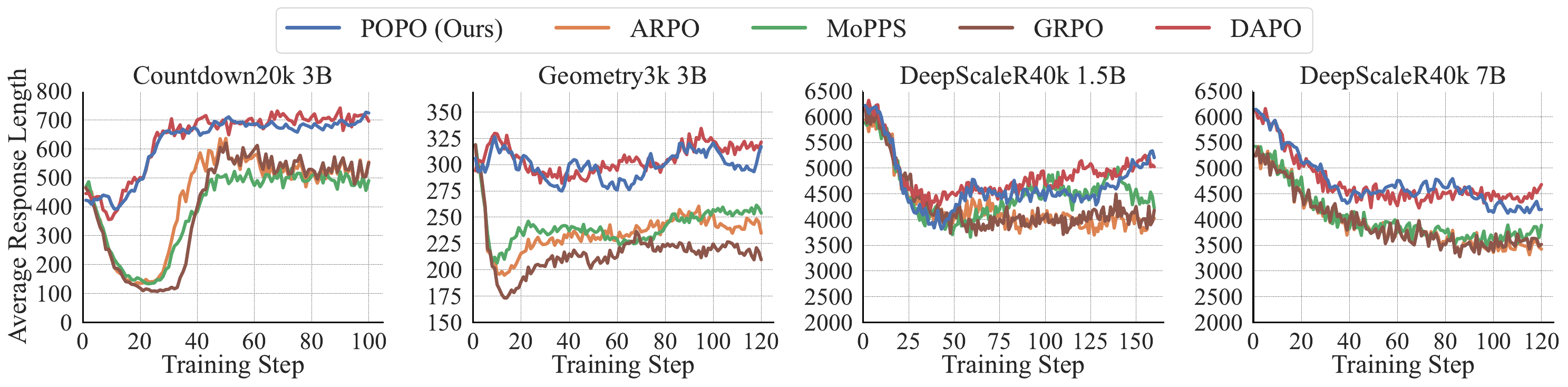}
    \caption{Comparison of the average response length across different RLVR methods.
    }
    \label{fig:responselength}
\end{figure}

Response length has been identified as a strong correlate of reasoning capability~\citep{yu2025dapo}.
\cref{fig:responselength} illustrates how different methods affect this metric over the course of training.
Across tasks, the average response length of \algo follows a trajectory similar to that of DAPO, whereas ARPO and MoPPS exhibit trends more closely aligned with GRPO.
This suggests that \algo leverages off-policy data to induce training dynamics comparable to those of DAPO, which relies solely on online data.
Both \algo and DAPO consistently produce longer responses than the remaining baselines.
Such extended outputs encourage deeper exploration and support more elaborate reasoning processes, partly explaining the observed performance differences~\citep{yu2025dapo}.

\section{Broader Societal Impacts}
\label{appsec:broader}

This work presents POPO, a sample-efficient framework for RL finetuning of LLMs. By replacing ineffective training data with high-quality off-policy samples, POPO reduces the number of model rollouts required during training, thereby lowering computational overhead and associated energy consumption. 
Improved efficiency at this stage has the potential to broaden access to the development of advanced reasoning models, particularly for researchers and organizations with limited computational resources.
At the same time, as with any method that enhances the capabilities of LLMs, there remains the possibility of misuse. However, POPO does not introduce fundamentally new risks beyond those already inherent in existing RLVR paradigms; rather, it improves the efficiency of established training processes without altering their core objectives or deployment characteristics.
The societal impact of such technology ultimately depends on the specific downstream applications and the safeguards employed during deployment.
We therefore emphasize the importance of continued research into responsible AI practices to ensure that gains in training efficiency are directed toward positive and broadly beneficial outcomes.

\section{Data Examples}
\label{appsec:dataexample}
We present representative examples of the data used across all tasks in our experiments.
For the DeepScaleR dataset and mathematics benchmarks, prompts are constructed by appending a chain-of-thought instruction~\citep{wei2022chain} along with a formatting constraint:
``Let's think step by step and output the final answer within \textbackslash boxed\{\}''.
For general reasoning benchmarks, we follow the evaluation protocol of \citet{yan2025learning} and adopt the PRIME prompt template.
For the Countdown task, we use the prompt format introduced by \citet{tinyzero}. For the Geometry task, we adopt the prompt templates from verl~\citep{sheng2024hybridflow}.

\begin{tcolorbox}[
  colback=white,
  colframe=black,
  boxrule=0.5pt,
  arc=0pt,
  colbacktitle=white,
  coltitle=black,
  fonttitle=\bfseries,
  title=DeepscaleR Data Example
]
\textbf{Prompt:}

Consider all 1000-element subsets of the set $\{1, 2, 3, \dots , 2015\}$. From each such subset choose the least element. The arithmetic mean of all of these least elements is $\frac{p}{q}$, where $p$ and $q$ are relatively prime positive integers. Find $p + q$.
Let's think step by step and output the final answer within \texttt{\textbackslash boxed\{\}}.

\textbf{Answer:}

2016
\end{tcolorbox}

\begin{tcolorbox}[
  colback=white,
  colframe=black,
  boxrule=0.5pt,
  arc=0pt,
  colbacktitle=white,
  coltitle=black,
  fonttitle=\bfseries,
  title=Countdown Data Example
]
\textbf{Prompt:}

A conversation between User and Assistant. The user asks a question, and the Assistant solves it. The assistant first thinks about the reasoning process in the mind and then provides the user with the answer.

User: Using the numbers [95, 11, 56], create an equation that equals 28. You can use basic arithmetic operations (+, -, *, /) and each number can only be used once. Show your work in $<$think$>\ </$think$>$ tags. And return the final answer in $<$answer$>\ </$answer$>$ tags, for example $<$answer$> (1 + 2) / 3 <$/answer$>$.

Assistant: Let me solve this step by step.

$<$think$>$

\end{tcolorbox}
\begin{tcolorbox}[
  colback=white,
  colframe=black,
  boxrule=0.5pt,
  arc=0pt,
  colbacktitle=white,
  coltitle=black,
  fonttitle=\bfseries,
  title=Geometry3k Data Example
]
\textbf{Prompt:}

\includegraphics[width=0.25\linewidth]{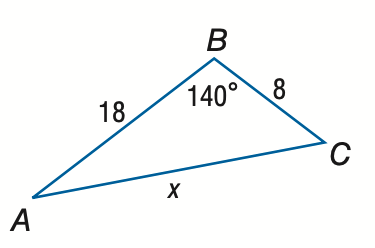}
Find x. Round to the nearest tenth.
You FIRST think about the reasoning process as an internal monologue and then provide the final answer. The reasoning process MUST BE enclosed within $<$think$>\ </$think$>$ tags. The final answer MUST BE put in \texttt{\textbackslash boxed\{\}}.

\textbf{Answer:}

24.7
\end{tcolorbox}

\begin{tcolorbox}[
  colback=white,
  colframe=black,
  boxrule=0.5pt,
  arc=0pt,
  colbacktitle=white,
  coltitle=black,
  fonttitle=\bfseries,
  title=MMLU-Pro Data Example
]
\textbf{Prompt:}

The following are multiple choice questions (with answers) about {\$}. Think step by step and then finish your answer with "\texttt{\textbackslash boxed\{X\}}" where X is the correct letter choice.

Question:

Managers are entrusted to run the company in the best interest of \_\_\_\_\_\_\_\_. Specifically, they have a duty to act for the benefit of the company, as well as a duty of \_\_\_\_\_\_\_\_ and of \_\_\_\_\_\_\_.

Options:

A. Shareholders, Diligence, Self-interest

B. Shareholders, Self-interest, Care and Skill

C. Stakeholders, Care and skill, Self-interest

D. Stakeholders, Diligence, Care and Skill

E. Customers, Care and Skill, Diligence

F. Shareholders, Care and Skill, Diligence

G. Shareholders, Self-interest, Diligence

H. Employees, Care and Skill, Diligence

I. Stakeholders, Self-interest, Diligence

J. Stakeholder, Care and Skill, Diligence

\textbf{Answer:}

F
\end{tcolorbox}

\begin{tcolorbox}[
  colback=white,
  colframe=black,
  boxrule=0.5pt,
  arc=0pt,
  colbacktitle=white,
  coltitle=black,
  fonttitle=\bfseries,
  title=GPQA-Diamond Data Example
]
\textbf{Prompt:}

The following are multiple choice questions (with answers) about {\$}. Think step by step and then finish your answer with "\texttt{\textbackslash boxed\{X\}}" where X is the correct letter choice.

Question:

Astronomers are studying a system of three exoplanets (Planet1, Planet2, and Planet3) with circular orbits discovered through the TTV method. They have found that the ratio of the equilibrium temperatures between Planet1 and Planet2 is approximately 1.4, and between Planet2 and Planet3, it is about 2.3. They have also found that the ratio of the masses between Planet1 and Planet2 is approximately 1.15, and between Planet2 and Planet3, it is about 1.35. By what factor is the orbital period of Planet3 larger than that of Planet1, if the albedo for all three planets is equal to 0.3 (similar to that of Earth)?

Options:

A. $\sim$ 3.2

B. $\sim$ 4.4

C. $\sim$ 10.4

D. $\sim$ 33.4

\textbf{Answer:}

D
\end{tcolorbox}

\begin{tcolorbox}[
  colback=white,
  colframe=black,
  boxrule=0.5pt,
  arc=0pt,
  colbacktitle=white,
  coltitle=black,
  fonttitle=\bfseries,
  title=ARC-C Data Example
]
\textbf{Prompt:}

The following are multiple choice questions (with answers) about {\$}. Think step by step and then finish your answer with "\texttt{\textbackslash boxed\{X\}}" where X is the correct letter choice.

Question:

George wants to warm his hands quickly by rubbing them. Which skin surface will produce the most heat?

Options:

A. dry palms

B. wet palms

C. palms covered with oil

D. palms covered with lotion

\textbf{Answer:}

A
\end{tcolorbox}

\end{document}